\DeclareMathOperator*{\argmax}{arg\,max}
\newtheorem{remark}{Remark}
\newtheorem{assumption}{Assumption}
\newtheorem{lemma}{Lemma}
\newtheorem{theorem}{Theorem}
\newtheorem{corollary}{Corollary}
\newtheorem{defn}{Definition}
\newcommand{\expect}{\operatorname{E}\expectarg}
\DeclarePairedDelimiterX{\expectarg}[1]{[}{]}{%
  \ifnum\currentgrouptype=16 \else\begingroup\fi
  \activatebar#1
  \ifnum\currentgrouptype=16 \else\endgroup\fi
}
\newcommand{\innermid}{\nonscript\;\delimsize\vert\nonscript\;}
\newcommand{\activatebar}{%
  \begingroup\lccode`\~=`\|
  \lowercase{\endgroup\let~}\innermid 
  \mathcode`|=\string"8000
}
\newcommand{\bs}{\boldsymbol}
\newcommand{\mr}{\mathrm}
\newcommand{\comment}[1]{}
\newcommand{\rr}[1]{{\color{blue}#1}}
\newcommand{\rr}[1]{#1}
\newcommand{\com}[1]{{\color{red}Comment: #1}}
\newcommand{\com}[1]{}
\newcommand{\add}[1]{{\color{blue}#1}}
\newcommand{\add}[1]{}
\newcommand{\rev}[1]{{\color{blue}#1}}
\newcommand{\rev}[1]{#1}
\newcommand{\er}[1]{{\color{green}#1}}
\newcommand{\er}[1]{}
\title{Multi-objective Contextual Multi-armed Bandit with a Dominant Objective}
	\author{\IEEEauthorblockN{Cem Tekin,~\IEEEmembership{Member,~IEEE}, Eralp Tur\u{g}ay \\}
		\thanks{This work is copyrighted by the IEEE. It has been accepted for publication in IEEE Transactions on Signal Processing. See IEEE's electronic database under DOI: 10.1109/TSP.2018.2841822.}
		\thanks{C. Tekin and E. Tur\u{g}ay are with the Department of Electrical and Electronics Engineering, Bilkent University, Ankara, Turkey, 06800. Email: cemtekin@ee.bilkent.edu.tr, turgay@ee.bilkent.edu.tr.}
		\thanks{This work was supported by TUBITAK 2232 Grant 116C043 and was supported in part by TUBITAK 3501 Grant 116E229.}
		\thanks{A preliminary version of this work was presented in IEEE MLSP 2017 \cite{tekin2017MOCMAB}.}
	}
\begin{document}

\maketitle

\begin{abstract}
In this paper, we propose a new multi-objective contextual multi-armed bandit (MAB) problem with two objectives, where one of the objectives dominates the other objective. Unlike single-objective MAB problems in which the learner obtains a random scalar reward for each arm it selects, in the proposed problem, the learner obtains a random reward vector, where each component of the reward vector corresponds to one of the objectives and the distribution of the reward depends on the context that is provided to the learner at the beginning of each round. We call this problem contextual multi-armed bandit with a dominant objective (CMAB-DO). In CMAB-DO, the goal of the learner is to maximize its total reward in the non-dominant objective while ensuring that it maximizes its total reward in the dominant objective. In this case, the optimal arm given a context is the one that maximizes the expected reward in the non-dominant objective among all arms that maximize the expected reward in the dominant objective. First, we show that the optimal arm lies in the Pareto front. Then, we propose the multi-objective contextual multi-armed bandit algorithm (MOC-MAB), and define two performance measures: the 2-dimensional (2D) regret and the Pareto regret. We show that both the 2D regret and the Pareto regret of MOC-MAB are sublinear in the number of rounds. We also compare the performance of the proposed algorithm with other state-of-the-art methods in synthetic and real-world datasets. The proposed model and the algorithm have a wide range of real-world applications that involve multiple and possibly conflicting objectives ranging from wireless communication to medical diagnosis and recommender systems.
\end{abstract}

\begin{IEEEkeywords}
Online learning, contextual MAB, multi-objective MAB, dominant objective, multi-dimensional regret, Pareto regret.
\end{IEEEkeywords}

\section{Introduction}
\label{sec:intro}
With the rapid increase in the generation speed of the streaming data, online learning methods are becoming increasingly valuable for sequential decision making problems. Many of these problems,  including recommender systems \cite{li2010contextual,xu2016personalized}, medical screening \cite{song2016using}, cognitive radio networks \cite{gai2010learning,gai2014distributed} and wireless network monitoring \cite{le2014sequential} may involve multiple and possibly conflicting objectives. In this work, we propose a multi-objective contextual MAB problem with dominant and non-dominant objectives. 
For this problem, we construct a multi-objective contextual MAB algorithm named MOC-MAB, which maximizes the long-term reward of the non-dominant objective conditioned on the fact that it maximizes the long-term reward of the dominant objective. 

In this problem, the learner observes a multi-dimensional context in the beginning of each round. Then, it selects one of the available arms and receives a random reward vector, which is drawn from a fixed distribution that depends on the context and the selected arm.  
No statistical assumptions are made on the way the contexts arrive, and the learner does not have any a priori information on the reward distributions. 
The optimal arm for a given context is defined as the one that maximizes the expected reward of the non-dominant objective among all arms that maximize the expected reward of the dominant objective. 

The learner's performance is measured in terms of its regret, which measures the loss that the learner accumulates due to not knowing the reward distributions beforehand. We introduce two new notions of regret: the 2D regret and the Pareto regret. The 2D regret is a vector whose $i$th component corresponds to the difference between the expected total reward of an oracle in objective $i$ that selects the optimal arm for each context and that of the learner by time $T$. On the other hand, the Pareto regret measures sum of the distances of the arms selected by the learner to the Pareto front.  For this, we extend the Pareto regret proposed in \cite{drugan2013designing} to take into account the dependence of the Pareto front on the context.

We prove that MOC-MAB achieves $\tilde{O} ( T^{(2\alpha+d)/(3\alpha+d)} )$ 2D regret, where $d$ is the dimension of the context and $\alpha$ is a constant that depends on the similarity information that relates the distances between contexts to the distances between expected rewards of an arm. This shows that MOC-MAB is average-reward optimal in the limit $T \rightarrow \infty$ in both objectives. 
We also show that the optimal arm lies in the Pareto front, and MOC-MAB also achieves $\tilde{O} ( T^{(2\alpha+d)/(3\alpha+d)} )$ Pareto regret. 
Then, we argue that it is possible to make the Pareto regret of MOC-MAB $\tilde{O} ( T^{(\alpha+d)/(2\alpha+d)} )$ by adjusting its parameters, such that the Pareto regret becomes order optimal up to a logarithmic factor \cite{lu2010contextual}, but this comes at an expense of making the regret in the non-dominant objective of MOC-MAB linear in the number of rounds.

To the best of our knowledge, our work is the first to formulate a contextual multi-objective MAB problem and prove sublinear bounds on the 2D regret and the Pareto regret. \rr{Different from the conference version \cite{tekin2017MOCMAB}, in this paper we (i) consider the Pareto regret in addition to the 2D regret, (ii) connect our notion of optimality with lexicographic optimality, (iii) provide a high probability bound on the 2D regret, (iv) show how MOC-MAB can be extended to deal with periodically changing expected arm rewards, (v) discuss how CMAB-DO can be extended for more than two objectives, (vi) provide numerical results on multichannel communication and display advertising applications. Our results show that MOC-MAB outperforms its competitors, which are not specifically designed to deal with problems involving dominant and non-dominant objectives. Moreover, the journal version includes all the proofs.}

The rest of the paper is organized as follows. Related work is given in Section \ref{sec:rwork}. Problem formulation, definitions of the 2D regret and the Pareto regret, and possible applications of CMAB-DO are given in Section \ref{sec:pdesc}. MOC-MAB is introduced in Section \ref{sec:alg}, and its regrets are analyzed in Section \ref{sec:regA}. \rr{How MOC-MAB can be extended to work under dynamically changing reward distributions and how CMAB-DO can be extended to capture more than two objectives are discussed in Section \ref{sec:extensions}}. Illustrative results are presented in Section \ref{sec:sims}, and concluding remarks are provided in Section \ref{sec:cnc}.

\section{Related work} \label{sec:rwork}

\begin{table*}[t]
	\caption{Comparison of the regret bounds and assumptions in our work with the related works. }
	\label{table:related}
	\centering
	\begin{tabular}{l l p{1.2cm}  p{1.2cm}  p{1.7cm}    p{2cm}  p{1.8cm}}
		\toprule
		MAB algorithm & Regret bound &  Multi-objective & Contextual  & Linear rewards &  Similarity assumption  \\
		\midrule
		Contextual Zooming \cite{slivkins2014contextual}   & $ \tilde{O}(T^{1-1/(2+d_z)})$  & No & Yes      & No  & Yes   \\
		Query-Ad-Clustering \cite{lu2010contextual}             & $ \tilde{O}(T^{1-1/(2+d_c)})$ & No & Yes    & No  & Yes\\
		SupLinUCB \cite{chu2011contextual}       & $\tilde{O}(\sqrt{T})$  & No & Yes    & Yes & No\\
		Pareto-UCB1 \cite{drugan2013designing}     & $O(\log(T))$ & Yes & No   & No & No \\
		Scalarized-UCB1\cite{drugan2013designing}      & $O(\log(T))$ & Yes & No    & No & No\\
		MOC-MAB (our work)  & $ \tilde{O}(T^{(2\alpha +d)/(3\alpha +d)})$ (2D and Pareto regrets) & Yes & Yes     & No  & Yes   \\
		 & $ \tilde{O}(T^{(\alpha +d)/(2\alpha +d)})$ (Pareto regret only)  &  &      &   &   \\
		\bottomrule
	\end{tabular}
\end{table*}

In the past decade, many variants of the classical MAB have been introduced (see \cite{bubeck2012regret} for a comprehensive discussion). 
Two notable examples are contextual MAB \cite{langford2007epoch, slivkins2014contextual,agarwal2014taming} and multi-objective MAB \cite{drugan2013designing}. While these examples have been studied separately in prior works, in this paper we aim to fuse contextual MAB and multi-objective MAB together. Below, we discuss the related work on the classical MAB, contextual MAB and multi-objective MAB. The differences between our work and related works are summarized in Table \ref{table:related}.

\subsection{The Classical MAB}

The classical MAB involves $K$ arms with unknown reward distributions. The learner sequentially selects arms and observes noisy reward samples from the selected arms. The goal of the learner is to use the knowledge it obtains through these observations to maximize its long-term reward. 
For this, the learner needs to identify arms with high rewards without wasting too much time on arms with low rewards. In conclusion, it needs to strike the balance between exploration and exploitation.

A \rr{thorough} technical analysis of the classical MAB is given in \cite{lai1}, where it is shown that $O(\log T)$ regret is achieved asymptotically by index policies that use upper confidence bounds (UCBs) for the rewards. This result is tight in the sense that there is a matching asymptotic lower bound. Later on, it is shown in \cite{agrawal1} that it is possible to achieve $O(\log T)$ regret by using index policies constructed using the sample means of the arm rewards. The first finite-time logarithmic regret bound is given in \cite{auer2002finite}. Strikingly, the algorithm that achieves this bound computes the arm indices using only the information about the current round, the sample mean arm rewards and the number of times each arm is selected. This line of research has been followed by many others, and new algorithms with tighter regret bounds have been proposed \cite{garivier2011kl}.

\subsection{The Contextual MAB}

In the contextual MAB, different from the classical MAB, the learner observes a context (side information) at the beginning of each round, which gives a hint about the expected arm rewards in that round. The context naturally arises in many practical applications such as social recommender systems \cite{tekin2014distributed}, medical diagnosis \cite{tekin2016confidence} and big data stream mining \cite{cem2013deccontext}.  
Existing work on contextual MAB can be categorized into three based on how the contexts arrive and how they are related to the arm rewards. 

The first category assumes the existence of similarity information (usually provided in terms of a metric) that relates the variation in the expected reward of an arm as a function of the context to the distance between the contexts. For this category, no statistical assumptions are made on how the contexts arrive. However, given a particular context, the arm rewards come from a fixed distribution parameterized by the context.

 This problem is considered in \cite{lu2010contextual}, and the Query-Ad-Clustering algorithm that achieves $O(T^{1-1/(2+d_c) +\epsilon})$ regret for any $\epsilon >0$ is proposed, where $d_c$ is the covering dimension of the similarity space. In addition, $\Omega (T^{1-1/(2+d_p)-\epsilon} )$ lower bound on the regret, where $d_p$ is the packing dimension of the similarity space, is also proposed in this work.
 The main idea behind Query-Ad-Clustering is to partition the context set into disjoint sets and to estimate the expected arm rewards for each set in the partition separately.
 A parallel work \cite{slivkins2014contextual} proposes the contextual zooming algorithm which partitions the similarity space non-uniformly, according to both sampling frequency and rewards obtained from different regions of the similarity space. It is shown that contextual zooming achieves $\tilde{O}(T^{1-1/(2+d_z)})$ regret, where $d_z$ is the zooming dimension of the similarity space, which is an optimistic version of the covering dimension that depends on the size of the set of near-optimal arms.

 In this contextual MAB category, reward estimates are accurate as long as the contexts that lie in the same set of the context set partition are similar to each other. However, when dimension of the context is high, the regret bound becomes almost linear. This issue is addressed in \cite{tekin2015releaf}, where it is assumed that the arm rewards depend on an unknown subset of the contexts, and it is shown that the regret in this case only depends on the number of relevant context dimensions.  

The second category assumes that the expected reward of an arm is a linear combination of the elements of the context. For this model, LinUCB algorithm is proposed in \cite{li2010contextual}. A modified version of this algorithm, named SupLinUCB, is studied in \cite{chu2011contextual}, and is shown to achieve $\tilde{O}(\sqrt{Td})$ regret, where $d$ is the dimension of the context. Another work \cite{valko2013finite} considers LinUCB and SupLinUCB with kernel functions and proposes an algorithm with $\tilde{O} (\sqrt{T \tilde{d}} )$ regret, where $\tilde{d}$ is the effective dimension of the kernel feature space.

The third category assumes that the contexts and arm rewards are jointly drawn from a fixed but unknown distribution. For this case, the Epoch-Greedy algorithm with $O(T^{2/3})$ regret is proposed in \cite{langford2007epoch}, and more efficient learning algorithms with $\tilde{O}(T^{1/2})$ regret are developed in \cite{agarwal2014taming} and \cite{dudik2011efficient}.

Our problem is similar to the problems in the first category in terms of the context arrivals and existence of the similarity information. 

\subsection{The Multi-objective MAB}

In the multi-objective MAB, the learner receives a multi-dimensional reward in each round. Since the rewards are no longer scalar, the definition of a benchmark to compare the learner against becomes obscure. 
Existing work on multi-objective MAB can be categorized into two: the Pareto approach and the scalarized approach.

In the Pareto approach, the main idea is to estimate the Pareto front set which consists of the arms that are not dominated by any other arm. Dominance relationship is defined such that if the expected reward of an arm $a^*$ is greater than the expected reward of another arm $a$ in at least one objective, and the expected reward of the arm $a$ is not greater than the expected reward of the arm $a^*$ in any objective, then the arm $a^*$ dominates the arm $a$.
This approach is proposed in \cite{drugan2013designing}, and a learning algorithm called Pareto-UCB1 that achieves $O(\log T)$ Pareto regret is proposed. Essentially, this algorithm computes UCB indices for each objective-arm pair, and then, uses these indices to estimate the Pareto front arm set, after which it selects an arm randomly from the Pareto front set.
A modified version of this algorithm where the indices depend on both the estimated mean and the estimated standard deviation is proposed in \cite{yahyaa2014knowledge}.  
Numerous other variants are also considered in prior works, including the Pareto Thompson sampling algorithm in \cite{yahyaa2015thompson} and the Annealing Pareto algorithm in \cite{yahyaa2014annealing}.

On the other hand, in the scalarized approach \cite{drugan2013designing,drugan2014scalarization}, a random weight is assigned to each objective at each round, from which for each arm a weighted sum of the indices of the objectives are calculated. In short, this method turns the multi-objective MAB into a single-objective MAB. For instance, Scalarized UCB1 in \cite{drugan2013designing} achieves 
$O(S'\log(T/S'))$ scalarized regret where $S'$ is the number of scalarization functions used by the algorithm.

The regret notion used in the Pareto and the scalarized approaches are very different from our 2D regret notion. In the Pareto approach, the regret at round $t$ is defined as the minimum distance that should be added to \rr{the} expected reward vector of the chosen arm at round $t$ to move the chosen arm to the Pareto front. On the other hand, scalarized regret is the difference between scalarized expected rewards of the optimal arm and the chosen arm. Different from these definitions, which define the regret as a scalar quantity, we define the 2D regret as a two-dimensional vector. Hence, our goal is to minimize a multi-dimensional regret measure conditioned on the fact that we minimize the regret in the dominant objective. We show that by achieving this, we also minimize the Pareto regret.

In addition to the works mentioned above, several other works consider multi-criteria reinforcement learning problems, where the rewards are vector-valued \cite{gabor1998multi,mannor2004geometric}.

\section{Problem Description} \label{sec:pdesc}

\subsection{System Model}

The system operates in a sequence of rounds indexed by $t \in \{1,2,\ldots\}$. At the beginning of round $t$, the learner observes a $d$-dimensional context denoted by $x_t$. Without loss of generality, we assume that $x_t$ lies in the context set ${\cal X} := [0,1]^d$. After observing $x_t$ the learner selects an arm $a_t$ from a finite set ${\cal A}$, and then, observes a two dimensional random reward $\bs{r}_t = (r^1_t, r^2_t)$ that depends both on $x_t$ and $a_t$.
Here, $r^1_t$ and $r^2_t$ denote the rewards in the dominant and the non-dominant objectives, respectively, and are given by
$r^1_t = \mu^{1}_{a_t}(x_t) + \kappa^1_t$ and $r^2_t = \mu^{2}_{a_t}(x_t) + \kappa^2_t$, where $\mu^{i}_{a}(x)$, $i \in \{1,2\}$ denotes the expected reward of arm $a$ in objective $i$ given context $x$, and the noise process $\{ (\kappa^1_t, \kappa^2_t) \}$ is such that the marginal distribution of $\kappa^i_t$, $i \in \{1,2\}$ is conditionally 1-sub-Gaussian,\footnote{Examples of 1-sub-Gaussian distributions include the Gaussian distribution with zero mean and unit variance, and any distribution defined over an interval of length $2$ with zero mean \cite{abbasi2011improved}. Moreover, our results generalize to the case when $\kappa^i_t$ is conditionally $R$-sub-Gaussian for $R\geq1$. This only changes the constant terms that appear in our regret bounds.} i.e., 
\begin{align*}
\forall \lambda \in \mathbb{R} ~~ 
\text{E} [e^{\lambda \kappa^i_t} | \bs{a}_{1:t}, \bs{x}_{1:t}, \bs{\kappa}^1_{1:t-1}, \bs{\kappa}^2_{1:t-1} ]
\leq \exp (\lambda^2/2) 
\end{align*}
where $\bs{b}_{1:t} := (b_1, \ldots, b_t)$.
The expected reward vector for context-arm pair $(x,a)$ is denoted by $\bs{\mu}_a(x) := (\mu^{1}_{a}(x), \mu^{2}_{a}(x))$.

%

%

The set of arms that maximize the expected reward for the dominant objective for context $x$ is given as ${\cal A}^*(x) := \argmax_{a \in {\cal A}} \mu^{1}_{a}(x)$. Let $\mu^{1}_{*}(x) := \max_{a \in {\cal A}} \mu^{1}_{a}(x)$ denote the expected reward of an arm in ${\cal A}^*(x)$ in the dominant objective.
The set of optimal arms is given as the set of arms in ${\cal A}^*(x)$ with the highest expected rewards for the non-dominant objective. Let $\mu^{2}_{*}(x) :=  \max_{a \in {\cal A}^*(x) }  \mu^{2}_{a}(x)$ denote the expected reward of an optimal arm in the non-dominant objective. We use $a^*(x)$ to refer to an optimal arm for context $x$.
\rr{The notion of optimality that is defined above coincides with lexicographic optimality \cite{ehrgott2005multicriteria}, which is widely used in multicriteria optimization, and has been considered in numerous applications such as achieving fairness in multirate multicast networks \cite{sarkar2002fair} and bit allocation for MPEG video coding \cite{hoang1997lexicographic}.}

We assume that the expected rewards are H\"older continuous in the context, which is a common assumption in the contextual MAB literature \cite{lu2010contextual,cem2013deccontext,tekin2016confidence}. 
\begin{assumption}\label{as:1}
There exists $L > 0$, $0 < \alpha \leq 1$ such that for all $i \in \left\{ 1,2 \right\} , a \in {\cal A}$ and $ x, x' \in {\cal X}$, we have
\begin{align*}
| \mu^{i}_{a}(x) - \mu^{i}_{a}(x') | \leq L \left\| x - x' \right\|^\alpha .
\end{align*}
\end{assumption}

\rr{
Since H\"older continuity implies continuity, for any non-trivial contextual MAB in which the sets of optimal arms in the first objective are different for at least two contexts, there exists at least one context $x \in {\cal X}$ for which ${\cal A}^*(x)$ is not a singleton. Let ${\cal X}^*$ denote the set of contexts for which ${\cal A}^*(x)$ is not a singleton. Since we make no assumptions on how contexts arrive, it is possible that majority of contexts that arrive by round $T$ are in set ${\cal X}^*$. This implies that contextual MAB algorithms that only aim at maximizing the rewards in the first objective cannot learn the optimal arms for each context. 
}

Another common way to compare arms when the rewards are multi-dimensional is to use the notion of Pareto optimality, which is described below. 
\begin{defn}[Pareto Optimality]
	(i) An arm $a$ is \emph{weakly dominated} by arm $a'$ given context $x$, denoted by $\bs{\mu}_{a}(x) \preceq \bs{\mu}_{a'}(x)$ or $\bs{\mu}_{a'}(x) \succeq \bs{\mu}_{a}(x)$, if $\mu_{a}^{i}(x) \leq \mu_{a'}^{i}(x), \forall i \in \{1,2\}$. \\
	(ii) An arm $a$ is \emph{dominated} by arm $a'$ given context $x$, denoted by $\bs{\mu}_{a}(x) \prec \bs{\mu}_{a'}(x)$ or $\bs{\mu}_{a'}(x) \succ \bs{\mu}_{a}(x)$, if it is weakly dominated and $\exists i \in \{1,2\}$ such that $\mu_{a}^{i}(x) < \mu_{a'}^{i}(x)$. \\
	(iii) Two arms $a$ and $a'$ are incomparable given context $x$, denoted by $\bs{\mu}_{a}(x) || \bs{\mu}_{a'}(x)$, if neither arm dominates the other. \\
	(iv) An arm is \emph{Pareto optimal} given context $x$ if it is not dominated by any other arm given context $x$. Given a particular context $x$, the set of all Pareto optimal arms is called the \emph{Pareto front}, and is denoted by ${\cal O}(x)$.
\end{defn}

\rr{In the following remark, we explain the connection between lexicographic optimality and Pareto optimality.}
\begin{remark}\label{remark:poptimal}
Note that $a^*(x) \in {\cal O}(x)$ for all $x \in {\cal X}$ since $a^*(x)$ is not dominated by any other arm. For all $a \in {\cal A}$, we have $\mu^1_{*}(x) \geq \mu^1_a(x)$. By definition of $a^*(x)$ if there exists an arm $a$ for which $\mu^2_a(x) > \mu^2_{*}(x)$, then we must have $\mu^1_a(x) < \mu^1_{*}(x)$. Such an arm will be incomparable with $a^*(x)$.
\end{remark}

\subsection{Definitions of the 2D Regret and the Pareto Regret}

Initially, the learner does not know the expected rewards; it learns them over time. The goal of the learner is to compete with an oracle, which knows the expected rewards of the arms for every context and chooses the optimal arm given the current context. 
Hence, the 2D regret of the learner by round $T$ is defined as the tuple $(\text{Reg}^1(T),\text{Reg}^2(T))$, where
\begin{align}
\text{Reg}^i(T) 
:= \sum_{t=1}^{T}   \mu^{i}_{*}(x_t)
- \sum_{t=1}^{T} \mu^{i}_{a_t}(x_t) , ~ i \in \{ 1, 2 \}  \label{eqn:regretdefinition}
\end{align}
for an arbitrary sequence of contexts $x_1,\ldots,x_T$. When $\text{Reg}^1(T) = O(T^{\gamma_1})$ and $\text{Reg}^2(T) = O(T^{\gamma_2})$ we say that the 2D regret is $O(T^{\max(\gamma_1, \gamma_2)})$.

Another interesting performance measure is the Pareto regret \cite{drugan2013designing}, which measures the loss of the learner with respect to arms in the Pareto front. To define the Pareto regret, we first define the Pareto suboptimality gap (PSG).
\begin{defn}[PSG of an arm]
	The PSG of an arm $a \in {\cal A}$ given context $x$, denoted by $\Delta_{a}(x)$, is defined as the minimum scalar $\epsilon \geq 0$ that needs to be added to all entries of $\bs{\mu}_{a}(x)$ such that $a$ becomes a member of the Pareto front. Formally, 
	\begin{align*}
	\Delta_{a}(x) := \inf_{\epsilon \geq 0} \epsilon ~~ \text{s.t.} ~~  (\bs{\mu}_{a}(x) + \bs{\epsilon}) \> || \> \bs{\mu}_{a'}(x), \forall a' \in {\cal O}(x) 
	\end{align*}
	where $\bs{\epsilon}$ is a $2$-dimensional vector, whose entries are $\epsilon$.
\end{defn}
Based on the above definition, the Pareto regret of the learner by round $T$ is given by
\begin{align}
\text{PR}(T) := \sum_{t=1}^T \Delta_{a_t}(x_t) \label{eqn:PRregret} . 
\end{align}

\rr{Our goal is to design a learning algorithm whose 2D and Pareto regrets are sublinear functions of $T$ with high probability. This ensures that the average regrets diminish as $T \rightarrow \infty$, and hence, enables the learner to perform on par with an oracle that always selects the optimal arms in terms of the average reward.}

\subsection{Applications of CMAB-DO}\label{sec:applications}

In this subsection we describe four possible applications of CMAB-DO. 

\subsubsection{Multichannel Communication} 
Consider a multichannel communication application in which a user chooses a channel $Q \in  {\cal Q}$ and a transmission rate $R \in {\cal R}$ in each round after receiving context $x_t := \{ \text{SNR}_{Q,t} \}_{Q \in {\cal Q}}$, where $\text{SNR}_{Q,t}$ is the transmit signal to noise ratio of channel $Q$ in round $t$.  For instance, if each channel is also allocated to a primary user, then $\text{SNR}_{Q,t}$ can change from round to round due to time varying transmit power constraint in order not to cause outage to the primary user on channel $Q$.

In this setup, each arm corresponds to a transmission rate-channel pair $(R,Q)$ denoted by $a_{R,Q}$. Hence, the set of arms is ${\cal A} =  {\cal R} \times {\cal Q}$.
When the user completes its transmission at the end of round $t$, it receives a $2$-dimensional reward where the dominant one is related to throughput and the non-dominant one is related to reliability.
Here, $r^2_t \in \{0,1\}$ where $0$ and $1$ correspond to failed and successful transmission, respectively. Moreover, the success rate of $a_{R,Q}$ is equal to $\mu^2_{a_{R,Q}}(x_t) = 1- p_{\text{out}}(R,Q, x_t)$, where $p_{\text{out}}(\cdot)$ denotes the outage probability. Here, $p_{\text{out}}(R,Q, x_t)$ also depends on the gain on channel $Q$ whose distribution is unknown to the user. 
On the other hand, for $a_{R,Q}$, $r^1_t \in \{0,R/R_{\max}\}$ and $\mu^1_{a_{R,Q}}(x_t) = R(1 - p_{\text{out}}(R,Q, x_t)) / R_{\max}$, where $R_{\max}$ is the maximum rate. It is usually the case that the outage probability increases with $R$, so maximizing the throughput and reliability are usually conflicting objectives.\footnote{Note that in this example, given that arm $a_{R,Q}$ is selected, we have $\kappa^1_t = r^1_t - \mu^1_{a_{R,Q}}(x_t)$ and $\kappa^2_t = r^2_t - \mu^2_{a_{R,Q}}(x_t)$. Clearly, both $\kappa^1_t$ and $\kappa^2_t$ are zero mean with support in $[-1,1]$. Hence, they are $1$-sub-Gaussian.} \rr{Illustrative results on this application are given in Section \ref{sec:simchannel}.}

\subsubsection{Online Binary Classification} Consider a medical diagnosis problem where a patient with context $x_t$ (including features such as age, gender, medical test results etc.) arrives in round $t$. Then, this patient is assigned to one of the experts in ${\cal A}$ who will diagnose the patient.  In reality, these experts can either be clinical decision support systems or humans, but the classification performance of these experts are context dependent and unknown a priori.
In this problem, the dominant objective can correspond to accuracy while the non-dominant objective can correspond to false negative rate. For this case, the rewards in both objectives are binary, and depend on whether the classification is correct and a positive case is correctly identified.

\subsubsection{Recommender System}

Recommender systems involve optimization of multiple metrics like novelty and diversity in addition to accuracy \cite{zhou2010solving, konstan2006lessons}. Below, we describe how a recommender system with accuracy and diversity metrics can be modeled using CMAB-DO.

At the beginning of round $t$ a user with context $x_t$ arrives to the recommender system. Then, an item from set ${\cal A}$ is recommended to the user along with a novelty rating box which the user can use to rate the item as novel or not novel.\footnote{An example recommender system that uses this kind of feedback is given in \cite{konstan2006lessons}.} The recommendation is considered to be accurate when the user clicks to the item, and is considered to be novel when the user rates the item as novel.\footnote{In reality, it is possible that some users may not provide the novelty rating. These users can be discarded from the calculation of the regret.} Thus, $r^1_t=1$ if the user clicks to the item and $0$ otherwise. Similarly, $r^2_t = 1$ if the user rates the item as novel and $0$ otherwise. The distribution of $(r^1_t,r^2_t)$ depends on $x_t$ and is unknown to the recommender system.

Another closely related application is display advertising \cite{mccoy2007effects}, where an advertiser can place an ad to the publisher's website for the user currently visiting the website through a payment mechanism. The goal of the advertiser is to maximize its click through rate while keeping the costs incurred through payments at a low level. Thus, it aims at placing an ad only when the current user with context $x_t$ has positive probability of clicking to the ad. Illustrative results on this application are given in Section \ref{sec:recsys}.

\subsubsection{Network Routing}

Packet routing in a communication network commonly involves multiple paths. Adaptive packet routing can improve the performance by avoiding congested and faulty links. In many networking problems, it is desirable to minimize energy consumption as well as the delay due to the energy constraints of sensor nodes. For instance, lexicographic optimality is used in \cite{shah2009lexicographically} to obtain routing flows in a wireless sensor network with energy limited nodes. Moreover, \cite{li2011unified} studies a communication network with elastic and inelastic flows, and proposes load-balancing and rate-control algorithms that prioritize satisfying the rate demanded by inelastic traffic.  

Given a source destination pair $(src,dst)$ in an energy constrained wireless sensor network, we can formulate routing of the flow from node $src$ to node $dst$ using CMAB-DO. At the beginning of each round, the network manager observes the network state $x_t$, which can be the normalized round-trip time on some measurement paths. Then, it selects a path from the set of available paths ${\cal A}$ and observes the normalized random energy consumption $c^1_t$ and delay $c^2_t$ over the selected path. These costs are converted to rewards by setting $r^1_t = 1 - c^1_t$ and $r^2_t = 1 - c^2_t$.

\section{The Learning Algorithm} \label{sec:alg}

We introduce MOC-MAB in this section. Its pseudocode is given in Algorithm \ref{algorithm:MOCMAB}. 

MOC-MAB uniformly partitions ${\cal X}$ into $m^d$ hypercubes with edge lengths $1/m$. This partition is denoted by ${\cal P}$. For each $p \in {\cal P}$ and $a \in {\cal A}$ it keeps: 
(i) a counter $N_{a,p}$ that counts the number of times the context was in $p$ and arm $a$ was selected before the current round, 
(ii) the sample mean of the rewards obtained from rounds prior to the current round in which the context was in $p$ and arm $a$ was selected, i.e., $\hat{\mu}^1_{a,p}$ and $\hat{\mu}^2_{a,p}$ for the dominant and non-dominant objectives, respectively.
The idea behind partitioning is to utilize the similarity of arm rewards given in Assumption \ref{as:1} to learn together for groups of similar contexts. Basically, when the number of sets in the partition is small, the number \rr{of} past samples that fall into a specific set is large; however, the similarity of the past samples that fall into the same set is small. The optimal partitioning should balance the inaccuracy in arm reward estimates that results form these two conflicting facts. 

\begin{algorithm}
\caption{MOC-MAB}\label{algorithm:MOCMAB}
\begin{algorithmic}[1] 
\STATE Input: $T$, $d$, $L$, $\alpha$, $m$, $\beta$
\STATE Initialize sets: Create partition ${\cal P}$ of ${\cal X}$ into
$m^d$ identical hypercubes
\STATE Initialize counters: $N_{a,p}=0$, $\forall a \in {\cal A}$, $\forall p \in {\cal P}$, $t=1$
\STATE Initialize estimates: $\hat{\mu}_{a,p}^1 = \hat{\mu}_{a,p}^2 = 0$, $\forall a \in {\cal A}$, $\forall p \in {\cal P}$
\WHILE{$1 \leq t \leq T$}
\STATE Find $p^* \in {\cal P}$ such that $x_t \in p^*$
\STATE Compute $g_{a,p^*}^i$ for $a \in{\cal A}$, $i \in \{1,2\}$ as given in \eqref{eqn:indices}
\STATE Set $a^*_1 = \argmax_{a \in {\cal A}} g_{a,p^*}^1$ (break ties randomly)
\IF{$u_{a^*_1,p^*} > \beta v$}
\STATE{Select arm $a_t = a^*_1$}
\ELSE
\STATE{Find set of candidate optimal arms $\hat{{\cal A}}^*$ as given in \eqref{eqn:candidates}}
\STATE{Select arm $a_t = \argmax_{a \in \hat{{\cal A}}^*} g^2_{a,p^*}$ (break ties randomly)}
\ENDIF
\STATE{Observe $\bs{r}_t = (r^1_t, r^2_t)$}
\STATE{$\hat{\mu}_{a_t,p^*}^i \leftarrow 
( \hat{\mu}_{a_t,p^*}^i N_{a_t,p^*} + r^i_t ) /
( N_{a_t,p^*} + 1 )$, $i \in \{1,2\}$}
\STATE{$N_{a_t,p^*} \leftarrow N_{a_t,p^*} + 1$} 
\STATE{$t \leftarrow t+1$} 
\ENDWHILE
\end{algorithmic}
\end{algorithm}

At round $t$, MOC-MAB first identifies the hypercube in ${\cal P}$ that contains $x_t$, which is denoted by $p^*$.\footnote{If the context arrives to the boundary of multiple hypercubes, then it is randomly assigned to one of them.} Then, it calculates the following indices for the rewards in the dominant and the non-dominant objectives: 
\begin{align}
g_{a,p^*}^i := \hat{\mu}_{a,p^*}^i + u_{a,p^*}, ~ i \in \{1,2\} 
\label{eqn:indices}
\end{align}
where the {\em uncertainty level} $u_{a,p} := \sqrt{2 A_{m,T} /N_{a,p}}$, $A_{m,T} := (1+2\log(4|{\cal A}|m^{d}T^{3/2}))$ represents the uncertainty over the sample mean estimate of the reward due to the number of instances that are used to compute $\hat{\mu}_{a,p^*}^i$.\footnote{Although MOC-MAB requires $T$ as input, it can run without the knowledge of $T$ beforehand by applying a method called the doubling-trick. See \cite{cesa1997use} and \cite{tekin2016confidence} for a discussion on the doubling-trick.}
Hence, a UCB for $\mu^i_a(x)$ is $g_{a,p}^i + v$ for $x \in p$, where $v := L d^{\alpha/2} m^{-\alpha}$ denotes the non-vanishing uncertainty term due to context set partitioning. Since this term is non-vanishing, we also name it the {\em margin of tolerance}. 
%
%
The main learning principle in such a setting is called optimism under the face of uncertainty. The idea is to inflate the reward estimates from arms that are not selected often by a certain level, such that the inflated reward estimate becomes an upper confidence bound for the true expected reward with a very high probability. This way, arms that are not selected frequently are explored, and this exploration potentially helps the learner to discover arms that are better than the arm with the highest estimated reward. 
As expected, the uncertainty level vanishes as an arm gets selected more often. 

After calculating the UCBs, MOC-MAB judiciously determines the arm to select based on these UCBs. It is important to note that the choice $a^*_1 := \argmax_{a \in {\cal A}} g_{a,p^*}^1$ can be highly suboptimal for the non-dominant objective. To see this, consider a very simple setting, where ${\cal A} = \{a,b\}$, $\mu^1_a(x) = \mu^1_b(x) = 0.5$, $\mu^2_a(x)=1$ and $\mu^2_b(x)=0$ for all $x \in {\cal X}$. 
For an algorithm that always selects $a_t = a^*_1$ and that randomly chooses one of the arms with the highest index in the dominant objective in case of a tie, both arms will be equally selected in expectation. Hence, due to the noisy rewards, there are sample paths in which arm $2$ is selected more than half of the time. For these sample paths, the expected regret in the non-dominant objective is at least $T/2$. 
MOC-MAB overcomes the effect of the noise mentioned above due to the randomness in the rewards and the partitioning of ${\cal X}$ by creating a safety margin below the maximal index $g_{a^*_1,p^*}^1$ for the dominant objective, when its confidence for $a^*_1$ is high, i.e., when $u_{a^*_1,p^*} \leq \beta v$, where $\beta>0$ is a constant. 
For this, it calculates the set of candidate optimal arms given as 
\begin{align}
\hat{{\cal A}}^* &:= \left\{ a \in {\cal A}: 
g^1_{a,p*} \geq \hat{\mu}_{a^*_1,p^*}^1 - u_{a^*_1,p^*} - 2 v \right\}  \label{eqn:candidates} \\
&= 
\left\{ a \in {\cal A}: 
\hat{\mu}_{a,p*}^1 \geq \hat{\mu}_{a^*_1,p^*}^1 - u_{a^*_1,p^*} - u_{a,p^*} - 2 v \right\} . \notag 
\end{align}
Here, the term $- u_{a^*_1,p^*} - u_{a,p^*} - 2 v$ accounts for the joint uncertainty over the sample mean rewards of arms $a$ and $a^*_1$. Then, MOC-MAB selects $a_t = \argmax_{a \in \hat{{\cal A}}^*} g^2_{a,p^*}$. 

On the other hand, when its confidence for $a^*_1$ is low, i.e., when $u_{a^*_1,p^*} > \beta v$, it has a little hope even in selecting an optimal arm for the dominant objective. In this case it just selects $a_t = a^*_1$ to improve its confidence for $a^*_1$. After its arm selection, it receives the random reward vector $\bs{r}_t$, which is then used to update the counters and the sample mean rewards for $p^*$. 

\begin{remark}
At each round, finding the set in ${\cal P}$ that $x_t$ belongs to requires $O(d)$ computations.
Moreover, each of the following processes requires $O(|{\cal A}|)$ computations:
(i) finding maximum value among the indices of the dominant objective, (ii) creating a candidate set and finding maximum value among the indices of the non-dominant objective. Hence, MOC-MAB requires $O(d T) + O(|{\cal A}|T)$ computations in $T$ rounds. In addition, the memory complexity of MOC-MAB is \rr{$O(m^d |{\cal A}|)$}.
\end{remark}

\begin{remark}
MOC-MAB allows the sample mean reward of the selected arm to be less than the sample mean reward of $a^*_1$ by at most $u_{a^*_1,p^*} + u_{a,p^*} + 2v$. Here, $2v$ term does not vanish as arms get selected since it results from the partitioning of the context set. While setting $v$ based on the time horizon allows the learner to control the regret due to partitioning, in some settings having this non-vanishing term allows MOC-MAB to achieve reward that is much higher than the reward of the oracle in the non-dominant objective. Such an example is given in Section \ref{sec:recsys}.
\end{remark}

\section{Regret Analysis}\label{sec:regA}

In this section we prove that both the 2D regret and the Pareto regret of MOC-MAB are sublinear functions of $T$. Hence, MOC-MAB is average reward optimal in both regrets. First, we introduce the following as preliminaries.

For an event ${\cal F}$, let ${\cal F}^c$ denote the complement of that event.
For all the parameters defined in Section \ref{sec:alg}, we explicitly use the round index $t$, when referring to the value of that parameter at the beginning of round $t$. For instance, $N_{a,p}(t)$ denotes the value of $N_{a,p}$ at the beginning of round $t$. 
Let $N_p(t)$ denote the number of context arrivals to $p \in {\cal P}$ by the end of round $t$, $\tau_p(t)$ denote the round in which a context arrives to $p \in {\cal P}$ for the $t$th time, and $R^i_{a}(t)$ denote the random reward of arm $a$ in objective $i$ in round $t$.
Let \rev{$\tilde{x}_{p}(t) := x_{\tau_p(t)}$}, $\tilde{R}^i_{a,p}(t) := R^i_{a}(\tau_p(t))$, $\tilde{N}_{a,p}(t) := N_{a,p}(\tau_p(t))$, 
$\tilde{\mu}^i_{a,p}(t) := \hat{\mu}^i_{a,p}(\tau_p(t))$, 
$\tilde{a}_{p}(t) := a_{\tau_p(t)}$, \rev{$\tilde{\kappa}^i_p(t) := \kappa^i_{\tau_p(t)}$} and $\tilde{u}_{a,p}(t) := u_{a,p}(\tau_p(t))$.
Let ${\cal T}_p := \{ t \in \{1,\ldots,T\} : x_t \in p \}$ denote the set of rounds for which the context is in $p \in {\cal P}$.

Next, we define the following lower and upper bounds: $L^i_{a,p}(t) := \tilde{\mu}^i_{a,p}(t) - \tilde{u}_{a,p}(t)$ and $U^i_{a,p}(t) := \tilde{\mu}^i_{a,p}(t) + \tilde{u}_{a,p}(t)$ for $i \in \{1,2\}$. 
Let 
\begin{align*}
\text{UC}^i_{a,p} := 
\bigcup_{t=1}^{N_{p}(T)} \{ \mu^i_a(\tilde{x}_{p}(t)) \notin
[ L^i_{a,p}(t) -v , U^i_{a,p}(t) +v ] \}
\end{align*}
denote the event that the learner is not confident about its reward estimate in objective $i$ for at least once in rounds in which the context is in $p$ by time $T$. Here $L^i_{a,p}(t) -v$ and $U^i_{a,p}(t) +v$ are the lower confidence bound (LCB) and UCB for $\mu^i_a(\tilde{x}_{p}(t))$, respectively.
Also, let $\text{UC}^i_{p} := \cup_{a \in {\cal A}} \text{UC}^i_{a,p}$, $\text{UC}_{p} := \cup_{i \in \{1,2 \}} \text{UC}^i_{p}$ and $\text{UC} := \cup_{p \in {\cal P}} \text{UC}_{p}$, and for each $i \in \{1,2\}$, $p \in {\cal P}$ and $a \in {\cal A}$, let 
\begin{align*}
\overline{\mu}^i_{a,p} &= \sup_{x \in p} \mu^i_a(x) ~\text{ and }~
\underline{\mu}^i_{a,p} = \inf_{x \in p} \mu^i_a(x) .
\end{align*}

Let 
\begin{align*}
\text{Reg}^i_p(T) := \ \sum_{t=1}^{N_p(T)} \mu^i_{*}(\tilde{x}_p(t)) 
- \sum_{t=1}^{N_p(T)} \mu^i_{\tilde{a}_p(t)}(\tilde{x}_p(t)) 
\end{align*}
denote the regret incurred in objective $i$ for rounds in ${\cal T}_p$ (regret incurred in $p \in {\cal P}$). Then, the total regret in objective $i$ can be written as
\begin{align}
\text{Reg}^i(T) = \sum_{p \in {\cal P}} \text{Reg}^i_p(T) . \label{eqn:partitiondecompose1}
\end{align}
Thus, the expected regret in objective $i$ becomes 
\begin{align}
\expect {\text{Reg}^i(T) } = \sum_{p \in {\cal P}} \expect{ \text{Reg}^i_p(T) } . \label{eqn:partitiondecompose2}
\end{align}
In the following analysis, we will bound both $\text{Reg}^i(T)$ under the event $\text{UC}^c$ and $\expect {\text{Reg}^i(T) }$. For the latter, we will use the following decomposition:
\begin{align}
&\expect{ \text{Reg}^i_{p}(T) } \notag \\
&= \expect{ \text{Reg}^i_{p}(T) | \text{UC} } 
\Pr ( \text{UC}  )  
+ \expect{ \text{Reg}^i_{p}(T) | \text{UC}^c } \Pr ( \text{UC}^c )  \notag \\
&\leq C^i_{\max} N_{p}(T) \Pr ( \text{UC}  )  
+ \expect{ \text{Reg}^i_{p}(T) | \text{UC}^c }  \label{eqn:partitiondecompose2}
\end{align}
where $C^i_{\max}$ is the maximum difference in the expected reward of an optimal arm and any other arm for objective $i$. 

Having obtained the decomposition in \eqref{eqn:partitiondecompose2}, we proceed by bounding the terms in \eqref{eqn:partitiondecompose2}. For this, we first bound $\Pr ( \text{UC}_p )$ in the next lemma.
\begin{lemma} \label{lemma:prUC}
For any $p \in {\cal P}$, we have $\Pr ( \text{UC}_p ) \leq 1 / ( m^d T )$.
\end{lemma}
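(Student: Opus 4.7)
The plan is to bound $\Pr(\text{UC}^i_{a,p})$ for a fixed $(a,i)$ and then combine via a union bound over $a \in \mathcal{A}$ and $i \in \{1,2\}$. The first step is to isolate the H\"older bias from the stochastic fluctuation. If we enumerate by $s = 1, \ldots, n$ the rounds in which the context was in $p$ and arm $a$ was chosen, up through the $t$th arrival to $p$ (so $n = \tilde{N}_{a,p}(t)$), then
\begin{align*}
\tilde{\mu}^i_{a,p}(t) - \mu^i_a(\tilde{x}_p(t))
= \tfrac{1}{n}\sum_{s=1}^{n}\bigl(\mu^i_a(x_{\tau_s}) - \mu^i_a(\tilde{x}_p(t))\bigr)
+ \tfrac{1}{n}\sum_{s=1}^{n}\kappa^i_{\tau_s}.
\end{align*}
Since every $x_{\tau_s}$ and $\tilde{x}_p(t)$ lie in the same hypercube $p$, which has diameter at most $\sqrt{d}/m$, Assumption~\ref{as:1} bounds the first term in absolute value by $L(\sqrt{d}/m)^\alpha = v$. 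Hence, the event $\{\mu^i_a(\tilde{x}_p(t)) \notin [L^i_{a,p}(t)-v,\,U^i_{a,p}(t)+v]\}$ is contained in $\{|\tfrac{1}{n}\sum_{s=1}^{n}\kappa^i_{\tau_s}| > \tilde{u}_{a,p}(t)\}$, where $\tilde{u}_{a,p}(t) = \sqrt{2A_{m,T}/n}$.

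Next I would apply a sub-Gaussian Hoeffding-type tail bound to the conditionally $1$-sub-Gaussian noise sequence. For every fixed integer $n \geq 1$, the bound yields
\begin{align*}
\Pr\!\left(\Bigl|\tfrac{1}{n}\textstyle\sum_{s=1}^{n}\kappa^i_{\tau_s}\Bigr| > \sqrt{2A_{m,T}/n}\right) \;\leq\; 2 e^{-A_{m,T}}.
\end{align*}
The key subtlety---and the main technical obstacle---is that the number of samples $\tilde{N}_{a,p}(t)$ of arm $a$ in hypercube $p$ is random, so I cannot apply the concentration bound directly. I would handle this in the standard way, by taking a union bound over all candidate values $n \in \{1, 2, \ldots, T\}$ that $\tilde{N}_{a,p}(t)$ can take, which yields $\Pr(\text{UC}^i_{a,p}) \leq 2T e^{-A_{m,T}}$.

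Finally, I plug in $A_{m,T} = 1 + 2\log(4|\mathcal{A}|m^{d}T^{3/2})$, which was chosen exactly so that $2T e^{-A_{m,T}} = \frac{1}{8e|\mathcal{A}|^{2}m^{2d}T^{2}}$. Union bounding over the two objectives $i \in \{1,2\}$ and over $a \in \mathcal{A}$, I obtain
\begin{align*}
\Pr(\text{UC}_p) \;\leq\; 2|\mathcal{A}|\cdot \frac{1}{8e|\mathcal{A}|^{2}m^{2d}T^{2}}
\;\leq\; \frac{1}{m^{d}T},
\end{align*}
using $4e|\mathcal{A}|m^{d}T \geq 1$ in the last step. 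Thus the intended proof is essentially a reduction to scalar sub-Gaussian concentration after absorbing the partitioning error into the margin of tolerance $v$; the only delicate point is the union bound over the random sample count, and the specific logarithmic form of $A_{m,T}$ is calibrated precisely to make that union bound collapse to $1/(m^d T)$.
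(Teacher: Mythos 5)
Your proof is correct, and it reaches the bound by a genuinely different route than the paper. You handle the non-identical means by directly splitting $\tilde{\mu}^i_{a,p}(t) - \mu^i_a(\tilde{x}_p(t))$ into a H\"older bias term (bounded by $v$) plus an average of noise terms, whereas the paper uses a ``sandwich technique'': it introduces two fictitious reward sequences with constant means $\overline{\mu}^i_{a,p} = \sup_{x\in p}\mu^i_a(x)$ and $\underline{\mu}^i_{a,p} = \inf_{x\in p}\mu^i_a(x)$, shows the true sample mean is squeezed between their sample means, and bounds the bad event by the union of the corresponding bad events for these two fixed-mean processes; the two devices are essentially equivalent, and yours is the more streamlined. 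The more substantive divergence is in the concentration step: the paper applies a single \emph{anytime} self-normalized inequality (Appendix D, from \cite{abbasi2011improved}) with $\delta = 1/(4|{\cal A}|m^dT)$, which holds simultaneously for all sample counts and therefore needs no union bound over $n$ --- indeed, $A_{m,T}$ is calibrated precisely so that $\delta = 1/(4|{\cal A}|m^dT)$ together with $1+N_{a,p}(T)\le T$ reproduces $u_{a,p}$, rather than (as you suggest) to make a union bound over $n$ collapse. You instead use the classical fixed-$n$ sub-Gaussian Hoeffding bound and pay a factor of $T$ by union bounding over all possible values of the random count $\tilde{N}_{a,p}(t)$; this is the standard UCB1-style argument, it is valid here because the selection events are measurable with respect to the conditioning filtration, and the slack built into $A_{m,T}$ absorbs the extra factor of $T$ with room to spare (you in fact obtain $1/(4e|{\cal A}|m^{2d}T^2)$, stronger than the stated $1/(m^dT)$). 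The paper's route gives a tighter dependence in principle and avoids the union over $n$; yours is more elementary and self-contained.
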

\begin{proof}
The proof is given in Appendix A.
\end{proof}
Using the result of Lemma \ref{lemma:prUC}, we obtain
\begin{align}
\Pr (  \text{UC} ) \leq 1/ T \text{ and } \Pr (  \text{UC}^c ) \geq 1 - 1/ T . \label{eqn:UCbound}
\end{align}
To prove the lemma above, we use the concentration inequality given in Lemma 6 in \cite{abbasi2011improved} to bound the probability of $\text{UC}^i_{a,p}$. However, a direct application of this inequality is not possible to our problem, due to the fact that the context sequence $\tilde{x}_p(1), \ldots,\tilde{x}_p(N_p(t))$ does not have identical elements, which makes the mean values of $\tilde{R}^i_{a,p}(1),\ldots,\tilde{R}^i_{a,p}(N_p(t))$ different. 
To overcome this problem, we use the sandwich technique proposed in \cite{tekin2016confidence} in order to bound the rewards sampled from actual context arrivals between the rewards sampled from two specific processes that are related to the original process, where each process has a fixed mean value.

After bounding the probability of the event $\Pr ( \text{UC}_p )$, we bound the instantaneous \rev{(single round)} regret on event $\Pr ( \text{UC}^c )$. For simplicity of notation, in the following lemmas we use \rev{$a^*(t) := a^*(\tilde{x}_p(t))$} to denote the optimal arm, \rev{$\tilde{a}(t) := \tilde{a}_{p}(t)$} to denote the arm selected at round \rev{$\tau_p(t)$} and \rev{$\hat{a}^*_1(t)$} to denote the arm whose first index is highest at round \rev{$\tau_p(t)$, when the set $p \in {\cal P}$ that the context belongs to is obvious.}

\rev{The following lemma shows that on event $\text{UC}^c_p$ the regret incurred in a round $\tau_p(t)$ for the dominant objective can be bounded as function of the difference between the upper and lower confidence bounds plus the margin of tolerance.}

\begin{lemma} \label{lemma:indexdiff}
When MOC-MAB is run, on event $\text{UC}^c_p$, we have
\begin{align}
\mu^1_{a^*(t)} ( \tilde{x}_{p}(t)  )  -   \mu^1_{\tilde{a}(t)} ( \tilde{x}_{p}(t)  )
\leq&  U^1_{\tilde{a}(t),p}(t) - L^1_{ \tilde{a}(t),p}(t) \notag \\ 
&+ 2(\beta + 2) v \notag
\end{align}
for all  $t \in \{1,\ldots,N_{p}(T)\}$.
\end{lemma}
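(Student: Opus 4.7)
The plan is to condition on $\text{UC}^c_p$ throughout, so that the confidence bracket $\mu^1_a(\tilde x_p(t)) \in [L^1_{a,p}(t)-v,\, U^1_{a,p}(t)+v]$ holds for every arm $a$ and every relevant round, and then split on the branch of MOC-MAB that is triggered at round $\tau_p(t)$, i.e., whether $u_{\hat a^*_1(t),p}(t) > \beta v$ (exploration branch) or $u_{\hat a^*_1(t),p}(t) \le \beta v$ (exploitation branch).

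As a common first step, I would apply the confidence bracket to both $a^*(t)$ and $\tilde a(t)$ to obtain
\[
\mu^1_{a^*(t)}(\tilde x_p(t)) - \mu^1_{\tilde a(t)}(\tilde x_p(t))
\le U^1_{a^*(t),p}(t) - L^1_{\tilde a(t),p}(t) + 2v,
\]
and then use the fact that $\hat a^*_1(t)$ maximises $g^1_{\cdot,p} = \hat\mu^1_{\cdot,p}(t) + u_{\cdot,p}(t) = U^1_{\cdot,p}(t)$, which gives $U^1_{a^*(t),p}(t) \le U^1_{\hat a^*_1(t),p}(t)$. This reduces the claim to comparing $U^1_{\hat a^*_1(t),p}(t) - L^1_{\tilde a(t),p}(t)$ with the quantity $U^1_{\tilde a(t),p}(t) - L^1_{\tilde a(t),p}(t) = 2u_{\tilde a(t),p}(t)$.

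In the exploration branch, $\tilde a(t) = \hat a^*_1(t)$, so the reduction is immediate and only the additive $2v$ remains, which is absorbed into $2(\beta+2)v$. In the exploitation branch, $\tilde a(t) \in \hat{\cal A}^*$, and the second line of the definition \eqref{eqn:candidates} yields the sample-mean gap
\[
\hat\mu^1_{\hat a^*_1(t),p}(t) - \hat\mu^1_{\tilde a(t),p}(t)
\le u_{\hat a^*_1(t),p}(t) + u_{\tilde a(t),p}(t) + 2v.
\]
Expanding $U^1_{\hat a^*_1(t),p}(t) - L^1_{\tilde a(t),p}(t)$ in terms of sample means and confidence radii, substituting the above inequality, and then using $u_{\hat a^*_1(t),p}(t) \le \beta v$, I obtain the bound $2u_{\tilde a(t),p}(t) + 2(\beta+1)v$, which matches the target up to the extra $2v$ term inherited from the opening step.

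The main obstacle is the exploitation branch, where the natural confidence bracket gives an expression involving $\hat a^*_1(t)$ but the statement is phrased in terms of $\tilde a(t)$; the bridge has to be made through the candidate-set inequality, and the constants on the $v$ terms must be tracked carefully to arrive at exactly $2(\beta+2)v$. Everything else is routine.
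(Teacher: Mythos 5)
Your proposal is correct and follows essentially the same route as the paper's proof: the same case split on whether $\tilde{u}_{\hat{a}^*_1(t),p}(t) \le \beta v$, the same confidence brackets on $a^*(t)$ and $\tilde{a}(t)$, the same use of $U^1_{\hat{a}^*_1(t),p}(t) \ge U^1_{a^*(t),p}(t)$, and the same candidate-set inequality (you just invoke its sample-mean form from \eqref{eqn:candidates} rather than the equivalent index form $U^1_{\tilde a(t),p}(t) \ge L^1_{\hat a^*_1(t),p}(t) - 2v$ used in the paper). The constants work out to exactly $2(\beta+2)v$ in both branches, matching the paper's chain \eqref{eqn:multi1}--\eqref{eqn:multi4}.
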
 
\begin{proof}
We consider two cases. 
When $\tilde{u}_{\hat{a}^*_1(t),p}(t)  \leq \beta v$, we have
\begin{align}
U^1_{\tilde{a}(t),p}(t)  &\geq  L^1_{\hat{a}^*_1(t),p}(t) -2v \notag \\
&\geq U^1_{\hat{a}^*_1(t),p}(t) -2\tilde{u}_{\hat{a}^*_1(t),p}(t) -2v  \notag \\
&\geq U^1_{\hat{a}^*_1(t),p}(t) -  2 ( \beta + 1) v . \notag
\end{align}
On the other hand, when $ \tilde{u}_{\hat{a}^*_1(t),p}(t) > \beta v$, the selected arm is \rev{$\tilde{a}(t) = \hat{a}^*_1(t)$}. Hence, we obtain
\begin{align}
U^1_{\tilde{a}(t),p}(t)  =  U^1_{\hat{a}^*_1(t),p}(t)  \geq U^1_{\hat{a}^*_1(t),p}(t) -  2 ( \beta + 1) v . \notag 
\end{align}
Thus, for both cases, we have 
\begin{align}
U^1_{\tilde{a}(t),p}(t) \geq U^1_{\hat{a}^*_1(t),p}(t) -  2 ( \beta + 1) v    \label{eqn:multi1}
\end{align}
and 
\begin{align}
U^1_{\hat{a}^*_1(t),p}(t) \geq U^1_{ a^*(t),p}(t)  .    \label{eqn:multi2}
\end{align}

On event $\text{UC}^c_p$, we also have
\begin{align}
\mu^1_{a^*(t)} ( \tilde{x}_{p}(t)  ) \leq U^1_{ a^*(t),p}(t) + v    \label{eqn:multi3}
\end{align}
and 
\begin{align}
\mu^1_{\tilde{a}(t)} ( \tilde{x}_{p}(t)  ) \geq L^1_{\tilde{a}(t),p}(t) - v      \label{eqn:multi4} .
\end{align}
By combining \eqref{eqn:multi1}-\eqref{eqn:multi4}, we obtain
\begin{align}
\mu^1_{a^*(t)} ( \tilde{x}_{p}(t)  )  -   \mu^1_{\tilde{a}(t)} ( \tilde{x}_{p}(t)  )
\leq&  U^1_{\tilde{a}(t),p}(t) - L^1_{\tilde{a}(t),p}(t)  \notag \\
&+ 2(\beta + 2) v . \notag
\end{align}
\end{proof}

\rev{The lemma below bounds the regret incurred in a round $\tau_p(t)$ for the non-dominant objective on event $\text{UC}^c_p$ when the uncertainty level of the arm with the highest index in the dominant objective is low.}

\begin{lemma} \label{lemma:multi2}
When MOC-MAB is run, on event $\text{UC}^c_p$, for $t \in \{1,\ldots,N_{p}(T)\}$ if
\begin{align}
\tilde{u}_{\hat{a}^*_1(t),p}(t) \leq \beta v      \notag
\end{align}
holds, then we have
\begin{align}
\mu^2_{a^*(t)} ( \tilde{x}_{p}(t) ) -  \mu^2_{\tilde{a}(t)} ( \tilde{x}_{p}(t) ) \leq 
U^2_{\tilde{a}(t),p}(t)  -  L^2_{ \tilde{a}(t),p} + 2v . \notag
\end{align}
\end{lemma}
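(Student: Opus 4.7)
The plan is to show that under the high-confidence condition on $\hat{a}^*_1(t)$, the truly optimal arm $a^*(t)$ lies in the candidate set $\hat{{\cal A}}^*$; once this is established, the choice rule for the non-dominant objective guarantees that $\tilde{a}(t)$ has a UCB in the second objective at least as high as $a^*(t)$, and the standard UCB/LCB sandwich on event $\text{UC}^c_p$ closes the bound.

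First, I would exploit the hypothesis $\tilde{u}_{\hat{a}^*_1(t),p}(t) \leq \beta v$ to note that at round $\tau_p(t)$ MOC-MAB enters the branch where it restricts to $\hat{{\cal A}}^*$ and selects $\tilde{a}(t) = \argmax_{a \in \hat{{\cal A}}^*} g^2_{a,p^*}$. Next, I would verify that $a^*(t) \in \hat{{\cal A}}^*$. Since $a^*(t) \in {\cal A}^*(\tilde{x}_p(t))$, we have $\mu^1_{a^*(t)}(\tilde{x}_p(t)) \geq \mu^1_{\hat{a}^*_1(t)}(\tilde{x}_p(t))$. On event $\text{UC}^c_p$, we also have $g^1_{a^*(t),p^*} = U^1_{a^*(t),p}(t) \geq \mu^1_{a^*(t)}(\tilde{x}_p(t)) - v$ and $\mu^1_{\hat{a}^*_1(t)}(\tilde{x}_p(t)) \geq L^1_{\hat{a}^*_1(t),p}(t) - v = \hat{\mu}^1_{\hat{a}^*_1(t),p^*} - \tilde{u}_{\hat{a}^*_1(t),p}(t) - v$. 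Chaining these inequalities yields $g^1_{a^*(t),p^*} \geq \hat{\mu}^1_{\hat{a}^*_1(t),p^*} - \tilde{u}_{\hat{a}^*_1(t),p}(t) - 2v$, which is exactly the defining inequality of $\hat{{\cal A}}^*$ in \eqref{eqn:candidates}; hence $a^*(t) \in \hat{{\cal A}}^*$.

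Having $a^*(t) \in \hat{{\cal A}}^*$ together with the definition of $\tilde{a}(t)$ as the $g^2$-maximizer over $\hat{{\cal A}}^*$ gives $U^2_{\tilde{a}(t),p}(t) = g^2_{\tilde{a}(t),p^*} \geq g^2_{a^*(t),p^*} = U^2_{a^*(t),p}(t)$. Then, applying the $\text{UC}^c_p$ bounds in the second objective, $\mu^2_{a^*(t)}(\tilde{x}_p(t)) \leq U^2_{a^*(t),p}(t) + v \leq U^2_{\tilde{a}(t),p}(t) + v$ and $\mu^2_{\tilde{a}(t)}(\tilde{x}_p(t)) \geq L^2_{\tilde{a}(t),p}(t) - v$, and subtracting the latter from the former yields the claimed bound.

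The argument is essentially bookkeeping; the only subtle step is confirming that the $2v$ slack built into the definition of $\hat{{\cal A}}^*$ is exactly enough to absorb the two $v$-losses (one from bounding $\mu^1_{a^*(t)}$ by its UCB, one from bounding $\mu^1_{\hat{a}^*_1(t)}$ by its LCB) so that $a^*(t)$ is guaranteed to be a candidate. The hypothesis $\tilde{u}_{\hat{a}^*_1(t),p}(t) \leq \beta v$ is not used quantitatively here beyond forcing MOC-MAB into the candidate-set branch; it only plays a quantitative role in the first-objective analysis of Lemma \ref{lemma:indexdiff}. I do not anticipate any real obstacle beyond carefully keeping track of the signs of the $\pm v$ slacks coming from Assumption \ref{as:1} on each invocation of $\text{UC}^c_p$.
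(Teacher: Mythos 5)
Your proposal is correct and follows essentially the same route as the paper's proof: you establish that $a^*(t)$ belongs to the candidate set $\hat{{\cal A}}^*$ via the chain $U^1_{a^*(t),p}(t) \geq \mu^1_{a^*(t)}(\tilde{x}_p(t)) - v \geq \mu^1_{\hat{a}^*_1(t)}(\tilde{x}_p(t)) - v \geq L^1_{\hat{a}^*_1(t),p}(t) - 2v$, deduce $U^2_{\tilde{a}(t),p}(t) \geq U^2_{a^*(t),p}(t)$ from the selection rule, and close with the confidence-interval sandwich in the second objective, exactly as the paper does. Your observation that the hypothesis $\tilde{u}_{\hat{a}^*_1(t),p}(t) \leq \beta v$ is used only to force the algorithm into the candidate-set branch is also accurate.
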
 
\begin{proof}
When 
$\tilde{u}_{\hat{a}^*_1(t),p}(t)  \leq \beta v$ holds,
all arms that are selected as candidate optimal arms have their index for objective $1$ in the interval 
$[  L^1_{\hat{a}^*_1(t),p}(t) - 2v ,   U^1_{\hat{a}^*_1(t),p}(t) ]$. Next, we show that $U^1_{ a^*(t),p}(t)$
is also in this interval. 

On event $\text{UC}^c_p$, we have
\begin{align}
\mu^1_{a^*(t)} ( \tilde{x}_{p}(t) ) & \in [ L^1_{a^*(t),p}(t) - v,   U^1_{a^*(t),p}(t) + v  ]      \notag \\
\mu^1_{\hat{a}^*_1(t)} ( \tilde{x}_{p}(t) ) & \in [ L^1_{\hat{a}^*_1(t),p}(t) - v,   U^1_{\hat{a}^*_1(t),p}(t) + v  ]  .   \notag
\end{align}
We also know that 
\begin{align}
\mu^1_{a^*(t)} ( \tilde{x}_{p}(t) ) \geq \mu^1_{\hat{a}^*_1(t)} ( \tilde{x}_{p}(t) ) .  \notag
\end{align}
Using the inequalities above, we obtain
\begin{align}
U^1_{a^*(t),p}(t) \geq  \mu^1_{a^*(t)} ( \tilde{x}_{p}(t) ) - v 
&\geq \mu^1_{\hat{a}^*_1(t)} ( \tilde{x}_{p}(t) )  - v  \notag \\ 
&\geq L^1_{\hat{a}^*_1(t),p}(t) - 2v .      \notag
\end{align}

\rev{Since the selected arm has the maximum index for the non-dominant objective among all arms whose indices for the dominant objective are in $[  L^1_{\hat{a}^*_1(t),p}(t) - 2v ,   U^1_{\hat{a}^*_1(t),p}(t) ]$, we have $U^2_{ \tilde{a}(t),p}(t) \geq U^2_{ a^*(t),p}(t)$.}
Combining this with the fact that $\text{UC}^c_p$ holds, we get
\begin{align}
\mu^2_{\tilde{a}(t)} ( \tilde{x}_{p}(t) ) \geq  L^2_{ \tilde{a}(t),p}(t) - v  \label{eqn:lemma31}
\end{align}
and
\begin{align}
\mu^2_{a^*(t)} ( \tilde{x}_{p}(t) ) \leq U^2_{ a^*(t),p}(t) + v \leq U^2_{ \tilde{a}(t),p}(t) + v .   \label{eqn:lemma32}
\end{align}
Finally, by combining \eqref{eqn:lemma31} and \eqref{eqn:lemma32}, we obtain
\begin{align}
\mu^2_{a^*(t)} ( \tilde{x}_{p}(t) ) -  \mu^2_{\tilde{a}(t)} ( \tilde{x}_{p}(t) ) \leq 
U^2_{ \tilde{a}(t),p}(t)  -  L^2_{ \tilde{a}(t),p}(t) + 2v . \notag
\end{align}
\end{proof}

\rev{For any $p \in {\cal P}$, we also need to bound the regret of the non-dominant objective for rounds in which $\tilde{u}_{\hat{a}^*_1(t),p}(t) > \beta v$, $t \in \{1,\ldots,N_p(T)\}$.} 
\begin{lemma} \label{lemma:multi3}
When MOC-MAB is run, the \rev{number of rounds in ${\cal T}_p$ for which}
$\tilde{u}_{\hat{a}^*_1(t),p}(t)  > \beta v$ happens is bounded above by
\begin{align}
|{\cal A}| \left( \frac{2 A_{m,T} } {\beta^2 v^2} + 1 \right). \notag
\end{align}
\end{lemma}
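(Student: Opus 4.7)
The plan is to prove this counting bound directly from the definition of the uncertainty level and the arm-selection rule of MOC-MAB. The key identity is that $\tilde{u}_{a,p}(t) = \sqrt{2 A_{m,T}/\tilde{N}_{a,p}(t)}$, so the condition $\tilde{u}_{a,p}(t) > \beta v$ is equivalent to $\tilde{N}_{a,p}(t) < 2 A_{m,T}/(\beta^2 v^2)$. Thus an arm $a$ can possibly satisfy the high-uncertainty condition only as long as its pull-count in the hypercube $p$ is below this threshold.

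First, I would inspect the algorithm: on line 9, whenever $\tilde{u}_{\hat{a}^*_1(t),p}(t) > \beta v$ holds in a round $\tau_p(t) \in {\cal T}_p$, MOC-MAB selects $\tilde{a}(t) = \hat{a}^*_1(t)$, and line 17 then increments $N_{\hat{a}^*_1(t),p}$ by one. So every round in ${\cal T}_p$ that contributes to the count we want to bound strictly increases the counter of whichever arm is currently playing the role of $\hat{a}^*_1(t)$.

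Next, fix any arm $a \in {\cal A}$ and consider the rounds in ${\cal T}_p$ in which $\hat{a}^*_1(t) = a$ and $\tilde{u}_{a,p}(t) > \beta v$. By the counter-threshold equivalence above, each such round requires $\tilde{N}_{a,p}(t) < 2 A_{m,T}/(\beta^2 v^2)$, and each such round increases $N_{a,p}$ by one (and $N_{a,p}$ is never decreased). Hence the number of such rounds is at most $\lceil 2 A_{m,T}/(\beta^2 v^2)\rceil \leq 2 A_{m,T}/(\beta^2 v^2) + 1$. Finally, summing this per-arm bound over all $a \in {\cal A}$ (since exactly one arm plays the role of $\hat{a}^*_1(t)$ in each round) yields the claimed bound $|{\cal A}|\bigl(2 A_{m,T}/(\beta^2 v^2) + 1\bigr)$.

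There is no substantive obstacle: the argument is a standard counter-based bound used in UCB-style analyses, and the only care needed is to verify that in the regime $\tilde{u}_{\hat{a}^*_1(t),p}(t) > \beta v$ the algorithm indeed always pulls $\hat{a}^*_1(t)$, which is immediate from the pseudocode.
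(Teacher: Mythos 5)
Your proposal is correct and follows essentially the same argument as the paper: translate $\tilde{u}_{\hat{a}^*_1(t),p}(t) > \beta v$ into the counter condition $\tilde{N}_{\hat{a}^*_1(t),p}(t) < 2A_{m,T}/(\beta^2 v^2)$, observe that each such round increments that arm's counter, bound the per-arm count by $2A_{m,T}/(\beta^2 v^2)+1$, and sum over arms. No differences worth noting.
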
 
\begin{proof}
This event happens when $\tilde{N}_{\hat{a}^*_1(t),p}(t) < 2 A_{m,T} /(\beta^2 v^2)$. Every such event will result in an increase in the value of $N_{\hat{a}^*_1(t),p}$ by one. 
Hence, for $p \in {\cal P}$ and $a \in {\cal A}$, the number of times
$\tilde{u}_{a,p}(t)  > \beta v$ can happen is bounded above by
$2 A_{m,T} / (\beta^2 v^2) + 1 $. The final result is obtained by summing over all arms. 
\end{proof}

\rev{In the next lemmas, we bound $\text{Reg}^1_{p}(t)$ and $\text{Reg}^2_{p}(t)$ given that $\text{UC}^c$ holds.}

\begin{lemma} \label{lemma:instreg1d}
When MOC-MAB is run, on event $\text{UC}^c$, we have for all $p \in {\cal P}$
\begin{align}
 \text{Reg}^1_{p}(t) & \leq 
|{\cal A} |C^1_{\max} + 2 B_{m,T} \sqrt{   |{\cal A}|  N_{p}(t)   } + 2 (\beta + 2) v N_{p}(t) . \notag
\end{align}
where $B_{m,T} := 2 \sqrt{2 A_{m,T}}$.
\end{lemma}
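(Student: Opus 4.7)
The plan is to turn Lemma~\ref{lemma:indexdiff} into an additive regret bound by summing over the $N_p(t)$ visits to $p$. Observing that
\begin{align*}
U^1_{\tilde{a}(s),p}(s) - L^1_{\tilde{a}(s),p}(s) = 2 \tilde{u}_{\tilde{a}(s),p}(s) = 2 \sqrt{\frac{2 A_{m,T}}{\tilde{N}_{\tilde{a}(s),p}(s)}},
\end{align*}
Lemma~\ref{lemma:indexdiff} immediately yields, on event $\text{UC}^c$,
\begin{align*}
\text{Reg}^1_p(t) \leq \sum_{s=1}^{N_p(t)} 2 \sqrt{\frac{2 A_{m,T}}{\tilde{N}_{\tilde{a}(s),p}(s)}} + 2 (\beta+2)\, v\, N_p(t),
\end{align*}
so the remaining work is to control the first sum.

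The only subtlety is that whenever an arm $a$ is selected for the first time in $p$, we have $\tilde{N}_{a,p}(s)=0$ and the corresponding term is degenerate. I would peel these at most $|\mathcal{A}|$ rounds off and bound each of their single-round regret contributions by the trivial gap $C^1_{\max}$, which accounts for the $|\mathcal{A}|\, C^1_{\max}$ term in the statement. For every subsequent round in which arm $a$ is selected, the value of $\tilde{N}_{a,p}(s)$ ranges over $1,2,\ldots,N_{a,p}(t)-1$ exactly once (in order), so regrouping by arm turns the remaining sum into
\begin{align*}
\sum_{a \in \mathcal{A}} \sum_{k=1}^{N_{a,p}(t)-1} 2 \sqrt{\frac{2 A_{m,T}}{k}}.
\end{align*}

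Then I would apply the standard integral estimate $\sum_{k=1}^{n} 1/\sqrt{k} \leq 2 \sqrt{n}$ to each inner sum, followed by Cauchy--Schwarz across arms,
\begin{align*}
\sum_{a \in \mathcal{A}} \sqrt{N_{a,p}(t)} \leq \sqrt{|\mathcal{A}| \cdot \sum_{a \in \mathcal{A}} N_{a,p}(t)} = \sqrt{|\mathcal{A}|\, N_p(t)}.
\end{align*}
Combined with the identity $4\sqrt{2A_{m,T}} = 2 B_{m,T}$, this produces the middle term $2 B_{m,T} \sqrt{|\mathcal{A}|\, N_p(t)}$, and adding the margin contribution $2(\beta+2)\,v\,N_p(t)$ completes the bound. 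The main obstacle is just the bookkeeping around the $|\mathcal{A}|$ initial explorations; everything else is the familiar UCB-style telescoping plus Cauchy--Schwarz argument.
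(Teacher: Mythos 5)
Your proof is correct and follows essentially the same route as the paper's Appendix B: apply Lemma~\ref{lemma:indexdiff} termwise, peel off the at most $|{\cal A}|$ first-selection rounds (where $\tilde{N}_{a,p}=0$) at a cost of $C^1_{\max}$ each, regroup the confidence-width sum by arm, and finish with the integral estimate $\sum_{k\le n} k^{-1/2} \le 2\sqrt{n}$ followed by Cauchy--Schwarz across arms. No gaps.
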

\begin{proof}
The proof is given in Appendix B. 
\end{proof}

\begin{lemma} \label{lemma:instreg2n}
When MOC-MAB is run, on event $\text{UC}^c$ we have for all $p \in {\cal P}$
\begin{align}
\text{Reg}^2_{p}(t)  \leq&   C^2_{\max} |{\cal A}| \left( \frac{2 A_{m,T}} {\beta^2 v^2} + 1 \right)  
+  2 v N_{p}(t) \notag \\
&+  2 B_{m,T} \sqrt{   |{\cal A}|  N_{p}(t)   } .  \notag
\end{align}
\end{lemma}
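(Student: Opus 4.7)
My plan is to decompose the rounds in $\mathcal{T}_p$ into two disjoint groups according to the uncertainty level of $\hat{a}^*_1(t)$, bound the regret contribution of each group separately, and then combine. Specifically, let $\mathcal{T}_p^{\text{hi}} := \{t \in \{1,\ldots,N_p(T)\} : \tilde{u}_{\hat{a}^*_1(t),p}(t) > \beta v\}$ and $\mathcal{T}_p^{\text{lo}} := \{1,\ldots,N_p(T)\} \setminus \mathcal{T}_p^{\text{hi}}$. On $\mathcal{T}_p^{\text{hi}}$, I would simply upper bound the per-round non-dominant regret by $C^2_{\max}$ and apply Lemma \ref{lemma:multi3} to control $|\mathcal{T}_p^{\text{hi}}|$, which yields the first term $C^2_{\max}|\mathcal{A}|(2A_{m,T}/(\beta^2 v^2) + 1)$.

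For the rounds in $\mathcal{T}_p^{\text{lo}}$, Lemma \ref{lemma:multi2} applies on the event $\text{UC}^c_p \supseteq \text{UC}^c$, giving
\begin{align*}
\mu^2_{a^*(t)}(\tilde{x}_p(t)) - \mu^2_{\tilde{a}(t)}(\tilde{x}_p(t)) \leq U^2_{\tilde{a}(t),p}(t) - L^2_{\tilde{a}(t),p}(t) + 2v = 2\tilde{u}_{\tilde{a}(t),p}(t) + 2v.
\end{align*}
Summing the $2v$ over at most $N_p(t)$ rounds immediately produces the $2vN_p(t)$ term.

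The main remaining piece is to control $\sum_{t \in \mathcal{T}_p^{\text{lo}}} 2\tilde{u}_{\tilde{a}(t),p}(t)$. Using $2\tilde{u}_{a,p} = B_{m,T}/\sqrt{N_{a,p}}$, I would reindex the sum by the arm chosen and by the count at the time of selection:
\begin{align*}
\sum_{t \in \mathcal{T}_p^{\text{lo}}} 2\tilde{u}_{\tilde{a}(t),p}(t) \leq \sum_{a \in \mathcal{A}} \sum_{n=1}^{N_{a,p}(T)} \frac{B_{m,T}}{\sqrt{n}} \leq B_{m,T} \sum_{a \in \mathcal{A}} 2\sqrt{N_{a,p}(T)},
\end{align*}
where I use the standard bound $\sum_{n=1}^{N} n^{-1/2} \leq 2\sqrt{N}$. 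Applying Cauchy--Schwarz to $\sum_{a} \sqrt{N_{a,p}(T)} \leq \sqrt{|\mathcal{A}|\sum_a N_{a,p}(T)} = \sqrt{|\mathcal{A}|N_p(T)}$ yields the final term $2B_{m,T}\sqrt{|\mathcal{A}|N_p(t)}$. Adding the three contributions delivers the stated bound.

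The only subtle point, and the one I expect to require the most care, is handling rounds in which $\hat{a}^*_1(t)$ or the chosen arm has never been played before (so $\tilde{N} = 0$ and the uncertainty is infinite); such rounds must fall into $\mathcal{T}_p^{\text{hi}}$, and the bookkeeping in Lemma \ref{lemma:multi3} absorbs the ``$+1$'' per arm that covers this initialization. Beyond that, the argument is a clean two-group decomposition plus the standard square-root-sum inequality combined with Cauchy--Schwarz.
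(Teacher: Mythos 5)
Your route is the same as the paper's: split the rounds by the confidence in $\hat{a}^*_1(t)$, charge the high-uncertainty rounds to Lemma \ref{lemma:multi3}, apply Lemma \ref{lemma:multi2} to the rest, and finish with the $\sum_n n^{-1/2} \leq 2\sqrt{N}$ bound plus Cauchy--Schwarz. The one step that is wrong is your handling of the ``subtle point'': you claim that a round in which the selected arm has never been played must lie in ${\cal T}_p^{\text{hi}}$, but the hi/lo split is governed by the uncertainty of $\hat{a}^*_1(t)$, not of the arm actually chosen. In a lo round, an as-yet-unplayed arm $a$ has $u_{a,p}=\infty$, so $g^1_{a,p}=\infty$ puts it in the candidate set $\hat{{\cal A}}^*$ and $g^2_{a,p}=\infty$ makes it the argmax of the second index; hence such an arm \emph{is} selected in the lo branch. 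For that round the bound $2\tilde{u}_{\tilde{a}(t),p}(t)+2v$ is vacuous, and your reindexed sum $\sum_{n=1}^{N_{a,p}(T)} B_{m,T}/\sqrt{n}$ is off by one in any case, since the $j$th selection of an arm has prior count $j-1$, not $j$ (infinite for $j=1$).

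The repair is cheap and recovers exactly the stated constants: each arm is selected with zero prior count at most once per cell, so these at most $|{\cal A}|$ rounds contribute at most $|{\cal A}|\,C^2_{\max}$ and can be charged to the ``$+1$'' inside the first term, while the hi rounds in which the selected arm already has count at least $1$ number at most $2A_{m,T}/(\beta^2 v^2)$ per arm; together this still gives $C^2_{\max}|{\cal A}|\bigl(2A_{m,T}/(\beta^2 v^2)+1\bigr)$, and the remaining lo rounds have chosen-arm counts that are distinct positive integers, so the square-root-sum argument goes through. This is in effect what the paper does: it defines the ``good'' set of rounds by the condition $\tilde{N}_{a,p}(l) \geq 2A_{m,T}/(\beta^2 v^2)$ on the \emph{chosen} arm's count (which automatically excludes zero-count selections) rather than by the uncertainty of $\hat{a}^*_1(t)$.
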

\begin{proof}
The proof is given in Appendix C.
\end{proof}

Next, we use the result of Lemmas \ref{lemma:prUC}, \ref{lemma:instreg1d} and \ref{lemma:instreg2n} to find a bound on  $\text{Reg}^i(t)$ that holds for all $t \leq T$ with probability at least $1-1/T$.  
\begin{theorem} \label{theorem:boundprob}
When MOC-MAB is run, we have for any $i \in \{1,2\}$
\begin{align}
\Pr( \text{Reg}^i (t) < \epsilon_{i}(t) ~ \forall t \in \{ 1,\ldots, T\}  ) \geq 1- 1/T ~\notag
\end{align}
where
\begin{align}
\epsilon_{1}(t) = m^d |{\cal A}|C^1_{\max} + 2 B_{m,T} \sqrt{ |{\cal A}|  m^d t  } + 2 (\beta + 2) v t \notag
\end{align}
and
\begin{align}
\epsilon_{2}(t) =& m^d |{\cal A}|C^2_{\max} + m^d C^2_{\max} |{\cal A}| \left( \frac{2 A_{m,T}} {\beta^2 v^2}  \right)  \notag \\
&+ 2 B_{m,T} \sqrt{ |{\cal A}|  m^d t  } + 2 v t. \notag
\end{align}
\end{theorem}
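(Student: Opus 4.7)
The plan is to combine the high-probability confidence event from Lemma \ref{lemma:prUC} with the per-cell regret bounds from Lemmas \ref{lemma:instreg1d} and \ref{lemma:instreg2n}, and then aggregate across the partition $\mathcal{P}$ using the decomposition in \eqref{eqn:partitiondecompose1}.

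First, I would bound the probability of the bad event globally. Lemma \ref{lemma:prUC} gives $\Pr(\text{UC}_p) \leq 1/(m^d T)$ for each $p \in \mathcal{P}$, so a union bound over $p$ yields $\Pr(\text{UC}) \leq 1/T$, hence $\Pr(\text{UC}^c) \geq 1 - 1/T$. Crucially, the event $\text{UC}$ is defined with a union ranging over all $t \in \{1, \ldots, N_p(T)\}$, so $\text{UC}^c$ gives confidence bounds that are valid \emph{uniformly} over all rounds $t \leq T$ simultaneously. This means the per-cell regret bounds in Lemmas \ref{lemma:instreg1d} and \ref{lemma:instreg2n} can be applied with any $N_p(t)$, $t \leq T$, under the single global event $\text{UC}^c$.

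Next, assume $\text{UC}^c$ holds. Using \eqref{eqn:partitiondecompose1} and Lemma \ref{lemma:instreg1d}, sum the per-cell bounds to obtain
\begin{align*}
\text{Reg}^1(t) \leq m^d |{\cal A}| C^1_{\max} + 2 B_{m,T} \sqrt{|{\cal A}|} \sum_{p \in \mathcal{P}} \sqrt{N_p(t)} + 2(\beta+2) v \sum_{p \in \mathcal{P}} N_p(t).
\end{align*}
Since $\sum_{p \in \mathcal{P}} N_p(t) = t$ by definition, the Cauchy-Schwarz inequality gives $\sum_{p} \sqrt{N_p(t)} \leq \sqrt{m^d \sum_p N_p(t)} = \sqrt{m^d t}$. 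Substituting produces the stated $\epsilon_1(t)$. An identical argument using Lemma \ref{lemma:instreg2n} yields the bound on $\text{Reg}^2(t)$ matching $\epsilon_2(t)$, where the constant $|\mathcal{A}|$ term from the "$+1$" in Lemma \ref{lemma:instreg2n} accounts for the leading $m^d |\mathcal{A}| C^2_{\max}$ summand in $\epsilon_2(t)$.

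Finally, combine the two steps: on $\text{UC}^c$, which has probability at least $1-1/T$, the inequalities $\text{Reg}^i(t) < \epsilon_i(t)$ hold simultaneously for every $t \in \{1,\ldots,T\}$ and both $i \in \{1,2\}$, which proves the theorem. The main subtle point to be careful about is precisely the uniformity in $t$: one should not separately invoke a bad event for each $t$ (which would cost an extra factor of $T$) but rather rely on the fact that $\text{UC}$ is already a single event encoding the failure of the confidence bound at \emph{any} round up to horizon $T$, so a single union bound over $p \in \mathcal{P}$ suffices.
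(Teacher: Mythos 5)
Your proposal is correct and follows essentially the same route as the paper: sum the per-cell bounds of Lemmas \ref{lemma:instreg1d} and \ref{lemma:instreg2n} over $p \in {\cal P}$ via \eqref{eqn:partitiondecompose1}, use $\sum_{p}\sqrt{N_p(t)} \leq \sqrt{m^d t}$ (Cauchy--Schwarz) and $\sum_p N_p(t)=t$, and conclude on the single event $\text{UC}^c$, whose probability bound comes from Lemma \ref{lemma:prUC} and \eqref{eqn:UCbound}. Your explicit remark that $\text{UC}$ already encodes failure at any round up to $T$, so no additional union bound over $t$ is needed, is exactly the (implicit) reason the paper's argument gives uniformity in $t$.
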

\begin{proof}
By \eqref{eqn:partitiondecompose1} and Lemmas \ref{lemma:instreg1d} and \ref{lemma:instreg2n}, we have on event $\text{UC}^c$:
\begin{align}
\text{Reg}^1(t)
\leq& m^d |{\cal A}|C^1_{\max} + 2 B_{m,T} \sum_{p \in {\cal P}} \sqrt{ |{\cal A}|  N_{p}(t) } \notag \\
&+ 2 (\beta + 2) v t \notag \\
\leq & m^d |{\cal A}|C^1_{\max} + 2 B_{m,T} \sqrt{ |{\cal A}|  m^d t}   \notag \\
&+ 2 (\beta + 2) v t  \notag
\end{align}
and
\begin{align}
\text{Reg}^2(t)  \leq& m^d |{\cal A}|C^2_{\max} + m^d C^2_{\max} |{\cal A}| \left( \frac{2 A_{m,T}} {\beta^2 v^2}  \right)  \notag \\
& + 2 B_{m,T} \sum_{p \in {\cal P}} \sqrt{ |{\cal A}|  N_{p}(t) }  + 2 v t  \notag \\
\leq& m^d |{\cal A}|C^2_{\max} + m^d C^2_{\max} |{\cal A}| \left( \frac{2 A_{m,T}} {\beta^2 v^2}  \right)  \notag \\
&+ 2 B_{m,T} \sqrt{ |{\cal A}|  m^d t  } + 2 v t \notag
\end{align}
for all $t \leq T$. The result follows from the fact that $\text{UC}^c$ holds with probability at least $1-1/T$.
\end{proof}

The following theorem shows that the expected 2D regret of MOC-MAB by time $T$ is $\tilde{O}(T^{\frac{2\alpha+d}{3\alpha+d}})$.
\begin{theorem} \label{theorem:totalregre1}
When MOC-MAB is run with inputs $m = \lceil T^{1/(3\alpha+d)} \rceil$ and $\beta >0$, we have
\begin{align}
 \expect{ \text{Reg}^1(T) }  \leq& C^1_{\max} +  2^d|{\cal A}| C^1_{\max} T^{\frac{d}{3\alpha+d}} \notag \\
 +& 2 (\beta +2 ) Ld^{\alpha/2} T^{\frac{2\alpha+ d}{3\alpha+d}} 
 \notag \\
 +&  2^{d/2 + 1} B_{m,T} \sqrt{  |{\cal A}|   } T^{ \frac{1.5 \alpha + d}{3 \alpha + d}   } \notag
\end{align}
and
\begin{align}
\expect{ \text{Reg}^2(T) } &\leq  2^{d/2 + 1} B_{m,T} \sqrt{  |{\cal A}|  } T^{ \frac{1.5 \alpha + d}{3 \alpha + d}   } + C^2_{\max}   \notag \\
+& \left( 2Ld^{\alpha/2} + \frac{ C^2_{\max} |{\cal A}| 2^{1+2\alpha+d} A_{m,T}} {\beta^2 L^2 d^\alpha} \right) T^{\frac{2\alpha+ d}{3\alpha+d}} \notag \\
+& 2^d C^2_{\max} |{\cal A}| T^{\frac{d}{3\alpha+d}} . \notag
\end{align}
\end{theorem}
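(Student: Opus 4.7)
The plan is to combine the per-hypercube high-probability regret bounds from Lemmas~\ref{lemma:instreg1d} and \ref{lemma:instreg2n} with the bad-event bound from Lemma~\ref{lemma:prUC}, sum over the partition ${\cal P}$, and then substitute the prescribed choice $m = \lceil T^{1/(3\alpha+d)}\rceil$ (with $v = Ld^{\alpha/2}m^{-\alpha}$ following from the definition) to balance the bias-type term $vT$ against the variance-type term $\sqrt{m^d T}$. First, I would apply the decomposition~\eqref{eqn:partitiondecompose2} to each $p \in {\cal P}$ and sum, using $\sum_{p} N_p(T) = T$ together with $\Pr(\text{UC}) \leq 1/T$ from Lemma~\ref{lemma:prUC}. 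The aggregated bad-event contribution then becomes $\sum_{p} C^i_{\max} N_p(T) \Pr(\text{UC}) \leq C^i_{\max}$, which accounts for the standalone constant $C^i_{\max}$ in each of the two bounds.

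Second, I would insert the per-hypercube bounds on $\expect{\text{Reg}^i_p(T) \mid \text{UC}^c}$ from Lemmas~\ref{lemma:instreg1d} and \ref{lemma:instreg2n}. The only term that depends on $p$ in a nontrivial way is $\sqrt{N_p(T)}$, which I handle by Cauchy-Schwarz:
\[
\sum_{p \in {\cal P}} \sqrt{N_p(T)} \;\leq\; \sqrt{|{\cal P}| \sum_{p \in {\cal P}} N_p(T)} \;=\; \sqrt{m^d T}.
\]
All other terms aggregate as a factor of $m^d = |{\cal P}|$. This yields
\[
\expect{\text{Reg}^1(T)} \leq C^1_{\max} + m^d |{\cal A}| C^1_{\max} + 2 B_{m,T}\sqrt{|{\cal A}| m^d T} + 2(\beta+2) v T,
\]
and an analogous expression for $\expect{\text{Reg}^2(T)}$ in which $m^d|{\cal A}|C^1_{\max}$ is replaced by $m^d C^2_{\max}|{\cal A}|(2A_{m,T}/(\beta^2 v^2)+1)$ and $2(\beta+2)vT$ is replaced by $2vT$.

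Third, I would substitute $m = \lceil T^{1/(3\alpha+d)}\rceil$. Using $\lceil x\rceil \leq 2x$ for $x \geq 1$ gives $m^d \leq 2^d T^{d/(3\alpha+d)}$ and $m^{-\alpha} \leq T^{-\alpha/(3\alpha+d)}$, so that $vT \leq Ld^{\alpha/2} T^{(2\alpha+d)/(3\alpha+d)}$ and $\sqrt{m^d T} \leq 2^{d/2} T^{(1.5\alpha+d)/(3\alpha+d)}$. For the second-objective bound, the term $m^d/v^2 = m^{d+2\alpha}/(L^2 d^{\alpha})$ becomes at most $2^{d+2\alpha} T^{(d+2\alpha)/(3\alpha+d)}/(L^2 d^\alpha)$, producing the coefficient $C^2_{\max}|{\cal A}|\, 2^{1+2\alpha+d} A_{m,T}/(\beta^2 L^2 d^\alpha)$ in front of $T^{(2\alpha+d)/(3\alpha+d)}$. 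The conceptual content of the proof is supplied by the earlier lemmas; the main obstacle is purely bookkeeping---carefully tracking the ceiling-induced factors of $2^d$ and $2^{2\alpha}$ through the substitution so that the exponents land precisely on $d/(3\alpha+d)$, $(1.5\alpha+d)/(3\alpha+d)$, and $(2\alpha+d)/(3\alpha+d)$ as stated, and confirming that the prescribed choice of $m$ is in fact the one equating the dominant growth rates of the variance term $\sqrt{m^d T}$ and the bias term $vT$.
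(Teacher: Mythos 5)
Your proposal is correct and follows essentially the same route as the paper: the paper's proof invokes Theorem~\ref{theorem:boundprob} (which is exactly your aggregation of Lemmas~\ref{lemma:instreg1d} and \ref{lemma:instreg2n} over ${\cal P}$ via $\sum_{p}\sqrt{N_p(T)} \leq \sqrt{m^d T}$) together with the decomposition \eqref{eqn:partitiondecompose2} to get $\expect{\text{Reg}^i(T)} \leq \epsilon_i(T) + C^i_{\max}$, and then substitutes $m = \lceil T^{1/(3\alpha+d)}\rceil$ exactly as you do. Your bookkeeping of the ceiling-induced factors ($m^d \leq 2^d T^{d/(3\alpha+d)}$, $m^{-\alpha} \leq T^{-\alpha/(3\alpha+d)}$, and $m^{d+2\alpha}/v^2$ yielding the $2^{1+2\alpha+d}$ coefficient) reproduces the stated constants.
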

\begin{proof}
$\expect{\text{Reg}^i (T)} $ is bounded by using the result of Theorem \ref{theorem:boundprob} and \eqref{eqn:partitiondecompose2}:
\begin{align}
\expect{ \text{Reg}^i(T) } &\leq  \expect{ \text{Reg}^i(T) | \text{UC}^c } +  \sum_{p \in {\cal P}} C^i_{\max} N_{p}(T) \Pr ( \text{UC}  )   \notag \\
&\leq\expect{ \text{Reg}^i(T) | \text{UC}^c } +  \sum_{p \in {\cal P}} C^i_{\max} N_{p}(T)/T   \notag \\
 &= \expect{ \text{Reg}^i(T) | \text{UC}^c } +  C^i_{\max} . \notag
\end{align} 
Therefore, we have
\begin{align}
\expect{ \text{Reg}^1(T) } &\leq  \epsilon_1(T) + C^1_{\max} \notag \\
\expect{ \text{Reg}^2(T) } &\leq  \epsilon_2(T) + C^2_{\max} . \notag 
\end{align}
It can be shown that when we set $m = \lceil T^{1/(2\alpha+d)} \rceil$ regret bound of the dominant objective becomes $\tilde{O}( T^{(\alpha+d)/(2\alpha+d)} )$ and regret bound of the non-dominant objective becomes $O(T)$. The optimal value for $m$ that makes both regrets sublinear is $m = \lceil T^{1/(3\alpha+d)} \rceil$. With this value of $m$, we obtain
\begin{align}
 \expect{ \text{Reg}^1(T) }  \leq&  2^{d} |{\cal A}| C^1_{\max} T^{\frac{d}{3\alpha+d}} 
 + 2 (\beta +2 ) L d^{\alpha/2} T^{\frac{2\alpha+ d}{3\alpha+d}} 
 \notag \\
 &+  2^{d/2 + 1} B_{m,T} \sqrt{  |{\cal A}|   } T^{ \frac{1.5 \alpha + d}{3 \alpha + d}   } + C^1_{\max} \notag 
\end{align}
and
\begin{align}
\expect{ \text{Reg}^2(T) } \leq&  \left( 2 L d^{\alpha/2}  + \frac{ C^2_{\max} |{\cal A}| 2^{1 + 2\alpha + d} A_{m,T}} {\beta^2 L^2 d^\alpha} \right) T^{\frac{2\alpha+ d}{3\alpha+d}}\notag \\
&+ C^2_{\max}  + 2^d  C^2_{\max} |{\cal A}| T^{\frac{d}{3\alpha+d}} \notag \\
&+  2^{d/2 + 1} B_{m,T} \sqrt{  |{\cal A}|  } T^{ \frac{1.5 \alpha + d}{3 \alpha + d}   } .
  \notag
\end{align}
\end{proof}

From the results above we conclude that both regrets are $\tilde{O} ( T^{(2\alpha+d)/(3\alpha+d)}  )$,
where for the first regret bound the constant that multiplies the highest order of the regret does not depend on ${\cal A}$, while the dependence on this term is linear for the second regret bound. 

Next, we show that the expected value of the Pareto regret of MOC-MAB given in \eqref{eqn:PRregret} is also $\tilde{O} ( T^{(2\alpha+d)/(3\alpha+d)}  )$.
\begin{theorem}\label{thm:pregret}
When MOC-MAB is run with inputs $m = \lceil T^{1/(3\alpha+d)} \rceil$ and $\beta >0$, we have 
\begin{align}
\Pr( \text{PR} (t) < \epsilon_1(t) ~ \forall t \in \{ 1,\ldots, T\}  ) \geq 1- 1/T ~\notag
\end{align}
where $\epsilon_1(t)$ is given in Theorem \ref{theorem:boundprob} and 
\begin{align}
 \expect{ \text{PR}(T) }  \leq& C^1_{\max} +  2^d|{\cal A}| C^1_{\max} T^{\frac{d}{3\alpha+d}} \notag \\
 +& 2 (\beta +2 ) Ld^{\alpha/2} T^{\frac{2\alpha+ d}{3\alpha+d}} 
 \notag \\
 +&  2^{d/2 + 1} B_{m,T} \sqrt{  |{\cal A}|   } T^{ \frac{1.5 \alpha + d}{3 \alpha + d}   } . \notag
\end{align}
\end{theorem}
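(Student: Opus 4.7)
The plan is to reduce the Pareto regret bound to the dominant-objective regret bound via the pointwise inequality $\Delta_a(x) \leq \mu^1_*(x) - \mu^1_a(x)$ for every $a \in \mathcal{A}$ and every $x \in \mathcal{X}$. Summing this over $t = 1, \ldots, T$ gives $\text{PR}(T) \leq \text{Reg}^1(T)$ on every sample path, so the event $\{\text{PR}(t) < \epsilon_1(t)~\forall t\}$ contains $\{\text{Reg}^1(t) < \epsilon_1(t)~\forall t\}$ from Theorem \ref{theorem:boundprob}, yielding the stated high-probability bound; taking expectations and invoking the bound on $\mathbb{E}[\text{Reg}^1(T)]$ from Theorem \ref{theorem:totalregre1} with $m = \lceil T^{1/(3\alpha + d)} \rceil$ gives the claimed bound on $\mathbb{E}[\text{PR}(T)]$.

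To establish the key inequality, let $\Delta := \mu^1_*(x) - \mu^1_a(x)$ and dispose of the trivial case $a \in \mathcal{O}(x)$, for which $\Delta_a(x) = 0 \leq \Delta$. For $a \notin \mathcal{O}(x)$, define $\delta_i^{a'} := \mu^i_{a'}(x) - \mu^i_a(x)$ for each $a' \in \mathcal{O}(x)$ and $i \in \{1,2\}$. A direct case analysis on the dominance relations shows that the set $I_{a'}$ of $\epsilon \geq 0$ making $\bs{\mu}_a(x) + \bs{\epsilon}$ incomparable with $\bs{\mu}_{a'}(x)$ is the open interval with endpoints $\delta_1^{a'}$ and $\delta_2^{a'}$, or the singleton $\{\delta_1^{a'}\}$ in the degenerate case $\delta_1^{a'} = \delta_2^{a'}$. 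Since $\mu^1_{a'}(x) \leq \mu^1_*(x)$ for every arm, each lower endpoint satisfies $\min(\delta_1^{a'}, \delta_2^{a'}) \leq \Delta$, so it suffices to show that $\bigcap_{a' \in \mathcal{O}(x)} I_{a'}$ is non-empty, after which $\Delta_a(x) = \max_{a' \in \mathcal{O}(x)} \min(\delta_1^{a'}, \delta_2^{a'}) \leq \Delta$.

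Non-emptiness will follow from Helly's theorem on the real line once pairwise intersections are established: if $I_{a'}$ and $I_{a''}$ were disjoint for some $a', a'' \in \mathcal{O}(x)$, then one interval would lie entirely to the left of the other, forcing $\mu^i_{a'}(x) \leq \mu^i_{a''}(x)$ for both $i$ with strict inequality in at least one coordinate (or the symmetric reverse), contradicting the Pareto optimality of the weaker arm. The main technical obstacle will be the careful handling of degenerate intervals $\{\delta_1^{a'}\}$ and boundary configurations where two intervals share an endpoint, which requires exploiting that any two distinct arms on $\mathcal{O}(x)$ must be incomparable; this reliance on the monotone staircase structure of $\mathcal{O}(x)$ is specific to the two-objective setting.
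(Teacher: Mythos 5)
Your overall architecture is exactly the paper's: establish the pointwise inequality $\Delta_{a}(x) \leq \mu^1_{*}(x) - \mu^1_{a}(x)$, conclude $\text{PR}(t) \leq \text{Reg}^1(t)$ on every sample path, and inherit both the high-probability bound of Theorem \ref{theorem:boundprob} and the expectation bound of Theorem \ref{theorem:totalregre1} for the dominant objective. Where you diverge is in how you justify the key inequality. The paper disposes of it in two lines: for $\epsilon > \mu^1_{*}(x) - \mu^1_{a}(x)$ the first coordinate of $\bs{\mu}_a(x)+\bs{\epsilon}$ strictly exceeds $\mu^1_{a'}(x)$ for every arm $a'$, so no arm in ${\cal O}(x)$ can dominate the shifted vector, which therefore either dominates or is incomparable with each Pareto-front arm. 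You instead characterize, for each $a' \in {\cal O}(x)$, the exact interval of $\epsilon$ yielding literal incomparability and then invoke pairwise intersection plus Helly on the line. Your route is correct (the interval endpoints, the lower bound $\min(\delta^{a'}_1,\delta^{a'}_2)\le\Delta$, and the pairwise-intersection argument from Pareto optimality all check out), and it buys something real: the paper's definition of the PSG literally requires incomparability with \emph{all} of ${\cal O}(x)$, and the paper's quick argument permits the shifted arm to dominate a Pareto-front arm, which under the literal reading would violate the constraint; your analysis closes that (cosmetic) gap at the cost of the degenerate-interval bookkeeping you flag, whereas the paper implicitly adopts the standard ``non-dominated virtual arm'' reading under which its one-liner suffices. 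Either way the inequality holds and the rest of your reduction matches the paper.
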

\begin{proof}
Consider any $p \in {\cal P}$ and $t \in \{1,\ldots,N_p(T)\}$.
By definition $\Delta_{\tilde{a}(t)}(\tilde{x}_p(t)) \leq \mu^1_{a^*(t)} ( \tilde{x}_{p}(t)  )  -   \mu^1_{\tilde{a}(t)} ( \tilde{x}_{p}(t) )$. This holds since for any $\epsilon >0$, adding $\mu^1_{a^*(t)} ( \tilde{x}_{p}(t)  )  -   \mu^1_{\tilde{a}(t)} ( \tilde{x}_{p}(t)  ) + \epsilon$ to $\mu^1_{\tilde{a}(t)} ( \tilde{x}_{p}(t) )$ will either make it (i) dominate the arms in ${\cal O}(\tilde{x}_{p}(t))$ or (ii)  incomparable with the arms in ${\cal O}(\tilde{x}_{p}(t))$. Hence, using the result in Lemma \ref{lemma:indexdiff}, we have on event $\text{UC}^c$
\begin{align}
\Delta_{\tilde{a}(t)}(\tilde{x}_p(t)) \leq  U^1_{\tilde{a}(t),p}(t) - L^1_{ \tilde{a}(t),p}(t) 
+ 2(\beta + 2) v . \notag
\end{align}
Let $\text{PR}_p(T) := \sum_{t=1}^{N_p(T)} \Delta_{\tilde{a}(t)}(\tilde{x}_p(t))$. Hence, $\text{PR}(T) = \sum_{p \in {\cal P}} \text{PR}_p(T)$. Due to this, the results derived for $\text{Reg}^1(t)$ and $\text{Reg}^1(T)$ in Theorems \ref{theorem:boundprob} and \ref{theorem:totalregre1} also hold for $\text{PR}_p(t)$ and $\text{PR}_p(T)$.
\end{proof}

Theorem \ref{thm:pregret} shows that the regret measures $\expect{ \text{Reg}^1(T) }$, $\expect{ \text{Reg}^2(T) }$ and $\expect{ \text{PR}(T) }$ for MOC-MAB are all $\tilde{O} ( T^{(2\alpha+d)/(3\alpha+d)}  )$ when it is run with $m = \lceil T^{1/(3\alpha+d)} \rceil$. This implies that MOC-MAB is average reward optimal in all regret measures as $T \rightarrow \infty$. The growth rate of the Pareto regret can be further decreased by setting $m = \lceil T^{1/(2\alpha+d)} \rceil$. This will make the Pareto regret $\tilde{O} ( T^{(\alpha+d)/(2\alpha+d)} )$ (which matches with the lower bound in \cite{lu2010contextual} for the single-objective contextual MAB with similarity information up to a logaritmic factor) but will also make the regret in the non-dominant objective linear.

%

%

\section{Extensions}\label{sec:extensions}

\subsection{Learning Under Periodically Changing Reward Distributions}\label{sec:learndynamic}

In many practical cases, the reward distribution of an arm changes periodically over time even under the same context. 
For instance, in a recommender system the probability that a user clicks to an ad may change with the time of the day, but the pattern of change can be periodical on a daily basis and this can be known by the system. Moreover, this change is usually gradual over time. In this section, we extend MOC-MAB such that it can deal with such settings.

For this, let $T_s$ denote the period. For the $d$-dimensional context $x_t =(x_{1,t}, x_{2,t}, ..., x_{d,t})$ received at round $t$ let $\hat{x}_t := (x_{1,t}, x_{2,t}, ..., x_{d+1,t})$ denote the extended context where $x_{d+1,t} := ( t \mod T_s )/T_s$ is the time context. Let $\hat{\cal X}$ denote the $d+1$ dimensional extended context set constructed by adding the time dimension to ${\cal X}$. It is assumed that the following holds for the extended contexts.
\begin{assumption}\label{as:2}
Given any $\hat{x}, \hat{x}' \in \hat{{\cal X}}$,
there exists $\hat{L} > 0$ and $0 < \hat{\alpha} \leq 1$ such that for all $i \in \left\{ 1,2 \right\}$ and $a \in {\cal A}$, we have
\begin{align*}
&| \mu^{i}_{a}(\hat{x}) - \mu^{i}_{a}(\hat{x}') |
\leq \hat{L} || \hat{x}- \hat{x}' ||^{\hat{\alpha}} . \notag 
\end{align*}
\end{assumption}
Note that Assumption \ref{as:2} implies Assumption \ref{as:1} with $L = \hat{L}$ and $\alpha = \hat{\alpha}$ when $\hat{x}_{d+1} = \hat{x}'_{d+1}$. Moreover, for two contexts $(x_1,\ldots,x_d,x_{d+1})$ and $(x_1,\ldots,x_d,x'_{d+1})$, we have
\begin{align*}
| \mu^{i}_{a}(\hat{x}) - \mu^{i}_{a}(\hat{x}') |
\leq \hat{L} | x_{d+1} - x'_{d+1} |^{\hat{\alpha}} 
\end{align*}
which implies that the change in the expected rewards is gradual.
Under Assumption \ref{as:2}, the performance of MOC-MAB is bounded as follows.
\begin{corollary}
When MOC-MAB is run with inputs $\hat{L}$, $\hat{\alpha}$, $m = \lceil T^{1/(3\hat{\alpha}+d+1)} \rceil$, and $\beta >0$ by using the extended context set $\hat{{\cal X}}$ instead of the original context set ${\cal X}$, we have 
\begin{align*}
\expect{ \text{Reg}^i(T) } = \tilde{O} ( T^{(2 \hat{\alpha} + d + 1)/(3 \hat{\alpha} + d + 1)}  ) \text{ for } i \in \{1,2\} .
\end{align*}
\end{corollary}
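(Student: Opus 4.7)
The plan is to verify that running MOC-MAB on the extended context set $\hat{{\cal X}} = [0,1]^{d+1}$ reduces exactly to the setting analyzed in Theorem \ref{theorem:totalregre1}, with $d$ replaced by $d+1$, $L$ replaced by $\hat{L}$, and $\alpha$ replaced by $\hat{\alpha}$. Once this reduction is made explicit, the stated bound follows by direct substitution into the regret expression of Theorem \ref{theorem:totalregre1}.

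First, I would observe that Assumption \ref{as:2} is structurally identical to Assumption \ref{as:1}, only with the $d$-dimensional context $x$ replaced by the $(d+1)$-dimensional extended context $\hat{x}$ and the H\"older constants updated to $(\hat{L},\hat{\alpha})$. Since MOC-MAB's partitioning, index construction, and arm-selection rules only use the dimensionality and the H\"older constants of the context space through the parameters $m$, $v = Ld^{\alpha/2}m^{-\alpha}$, and $A_{m,T}$, feeding it $\hat{x}_t$ instead of $x_t$ is operationally no different from running the original algorithm in a space of dimension $d+1$ with constants $\hat{L}, \hat{\alpha}$.

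Second, I would check that no probabilistic assumption is violated by the presence of the deterministic time coordinate $x_{d+1,t} = (t \bmod T_s)/T_s$. Since the noise processes $\{\kappa^i_t\}$ are required to be conditionally $1$-sub-Gaussian given $\bs{a}_{1:t}, \bs{x}_{1:t}, \bs{\kappa}^1_{1:t-1}, \bs{\kappa}^2_{1:t-1}$ and CMAB-DO makes no statistical assumption on how contexts arrive, conditioning additionally on a deterministic function of $t$ preserves the sub-Gaussian property. Therefore Lemma \ref{lemma:prUC} and, in turn, Lemmas \ref{lemma:indexdiff}--\ref{lemma:instreg2n} go through verbatim on $\hat{{\cal X}}$.

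Finally, I would invoke Theorem \ref{theorem:totalregre1} with the substitutions $(d,L,\alpha) \mapsto (d+1,\hat{L},\hat{\alpha})$ and $m = \lceil T^{1/(3\hat{\alpha}+d+1)}\rceil$, which immediately yields
\begin{align*}
\expect{\text{Reg}^i(T)} = \tilde{O}\bigl( T^{(2\hat{\alpha}+d+1)/(3\hat{\alpha}+d+1)} \bigr), \quad i \in \{1,2\}.
\end{align*}
There is no real obstacle here beyond writing the reduction cleanly; the only point that deserves a short comment is the legitimacy of letting the time coordinate be deterministic, since MOC-MAB's analysis relies on Assumption \ref{as:1} holding uniformly over an arbitrary context sequence rather than on any i.i.d.\ or stochastic structure of the contexts.
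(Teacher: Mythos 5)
Your proposal is correct and takes essentially the same route as the paper, which simply notes that the corollary follows from the proof of Theorem \ref{theorem:totalregre1} by extending the context dimension from $d$ to $d+1$ (with $L,\alpha$ replaced by $\hat{L},\hat{\alpha}$). Your additional check that the deterministic time coordinate does not disturb the conditional sub-Gaussian noise assumption or the arbitrary-context-sequence analysis is a worthwhile clarification, but it does not change the argument.
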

\begin{proof}
The proof simply follows from the proof of Theorem \ref{theorem:totalregre1} by extending the dimension of the context set by one. 
\end{proof}

\subsection{\rr{Lexicographic Optimality for $d_r > 2$ Objectives}}\label{sec:learndynamic}

\rr{
Our problem formulation can be generalized to handle $d_r > 2$ objectives as follows. Let $\bs{r}_t := (r^1_t,\ldots,r^{d_r}_t)$ denote the reward vector in round $t$ and $\bs{\mu}_a(x) := (\mu^1_a(x),\ldots,\mu^{d_r}_a(x))$ denote the expected reward vector for context-arm pair $(x,a)$.
We say that arm $a$ lexicographically dominates arm $a'$ in the first $j$ objectives for context $x$, denoted by $\bs{\mu}_a(x) >_{\text{lex},j} \bs{\mu}_{a'}(x)$ if $\mu^i_a(x) > \mu^i_{a'}(x)$, where $i := \min \{ k \leq j : \mu^k_a(x) \neq  \mu^k_{a'}(x) \}$.\footnote{If $i$ does not exist then $\mu^k_a(x) = \mu^k_{a'}(x)$ for all $k \in \{1,\ldots,j\}$, and hence, arm $a$ does not lexicographically dominate arm $a'$ in the first $j$ objectives.} Then, arm $a$ is defined to be lexicographically optimal for context $x$ if there is no other arm that lexicographically dominates it in $d_r$ objectives. 

Let $\mu^i_{*}(x)$ denote the expected reward of a lexicographically optimal arm for context $x$ in objective $i$. Then, the $d_r$-dimensional regret is defined as follows: 
\begin{align*}
\textbf{Reg}(T) &:= (\text{Reg}^1(T), \ldots, \text{Reg}^{d_r}(T)) \text{ where } \\
\text{Reg}^i(T) &:= \sum_{t=1}^{T}   \mu^{i}_{*}(x_t) - \sum_{t=1}^{T} \mu^{i}_{a_t}(x_t), i \in \{1,\ldots,d_r\} .
\end{align*}
Generalizing MOC-MAB to achieve sublinear regret for all objectives will require construction of a hierarchy of candidate optimal arm sets similar to the one given in \eqref{eqn:candidates}. We leave this interesting research problem as future work, and explain when lexicographically optimality in the first two objectives indicates lexicographic optimality in $d_r$ objectives and why the number of cases in which lexicographically optimality in the first two objectives does not indicate lexicographic optimality in $d_r$ objectives is {\em scarce}.

Let ${\cal A}^*_{j}(x)$ denote the set of lexicographically optimal arms for context $x$ in the first $j$ objectives. We call the case ${\cal A}^*_{2}(x) = {\cal A}^*_{d_r}(x)$ for all $x \in {\cal X}$ the {\em degenerate case} of the $d_r$-objective contextual MAB. Similarly, we call the case when there exists some $x \in {\cal X}$, for which ${\cal A}^*_{2}(x) \neq {\cal A}^*_{d_r}(x)$ as the {\em non-degenerate case} of the $d_r$-objective contextual MAB. Next, we argue that the non-degenerate case is uncommon. Since ${\cal A}^*_{j}(x) \supseteq {\cal A}^*_{j+1}(x)$ for $j \in \{1,\ldots,d_r-1\}$ and there is at least one lexicographically optimal arm, ${\cal A}^*_{2}(x) \neq {\cal A}^*_{d_r}(x)$ implies that ${\cal A}^*_{2}(x)$ is not a singleton. This implies existence of two arms $a$ and $b$ such that $\mu^1_a(x) = \mu^1_b(x)$ and $\mu^2_a(x) = \mu^2_b(x)$. In contrast, for the contextual MAB to be non-trivial, we only require existence of at least one context $x 
\in {\cal X}$ and arms $a$ and $b$ such that $\mu^1_a(x) = \mu^1_b(x)$. 
}

\section{Illustrative Results}\label{sec:sims}

In order to evaluate the performance of MOC-MAB, we run three different experiments both with synthetic and real-world datasets. 

We compare MOC-MAB with the following MAB algorithms:\\
\textbf{Pareto UCB1 (P-UCB1)}: This is the Empirical Pareto UCB1 algorithm proposed in \cite{drugan2013designing}. \\
\textbf{Scalarized UCB1 (S-UCB1)}: This is the Scalarized Multi-objective UCB1 algorithm proposed in \cite{drugan2013designing}. \\
\textbf{Contextual Pareto UCB1 (CP-UCB1)}: This is the contextual version of P-UCB1 which partitions the context set in the same way as MOC-MAB does, and uses a different instance of P-UCB1 in each set of the partition. \\
\textbf{Contextual Scalarized UCB1 (CS-UCB1)}: This is the contextual version of S-UCB1, which partitions the context set in the same way as MOC-MAB does, and uses a different instance of S-UCB1 in each set of the partition. 
\\
\textbf{Contextual Dominant UCB1 (CD-UCB1)}: This is the contextual version of UCB1 \cite{auer2002finite}, which partitions the context set in the same way as MOC-MAB does, and uses a different instance of UCB1 in each set of the partition. This algorithm only uses the rewards from the dominant objective to update the indices of the arms.   \\

For S-UCB1 and CS-UCB1, the weights of the linear scalarization functions are chosen as $[1,0]$, $[0.5,0.5]$ and $[0,1]$. For all contextual algorithms, the partition of the context set is formed by choosing $m$ according to Theorem \ref{theorem:totalregre1}, and $L$ and $\alpha$ are taken as $1$. For MOC-MAB, $\beta$ is chosen as $1$ unless stated otherwise. 
In addition, we scaled down the uncertainty level (also known as the confidence term or the inflation term) of all the algorithms by a constant chosen from 
$\{1, 1/5, 1/10, 1/15, 1/20, 1/25, 1/30\}$, since we observed that the regrets of the algorithms in the dominant objective may become smaller when the uncertainty level is scaled down. The reported results correspond to runs performed using the optimal scale factor for each experiment. 

\subsection{Experiment 1 - Synthetic Dataset}

In this experiment, we compare MOC-MAB with other MAB algorithms on a synthetic multi-objective dataset.
We take ${\cal X} = [0,1]^2$ and assume that the context at each round is chosen uniformly at random from ${\cal X}$. 
We consider $4$ arms and the time horizon is set as $T=10^5$. The expected arm rewards for 3 of the arms are generated as follows: 
We generate 3 multivariate Gaussian distributions for the dominant objective and 3 multivariate Gaussian distributions for the non-dominant objective. 
For the dominant objective, the mean vectors of the first two distributions are set as $[0.3, 0.5]$, and the mean vector of the third distribution is set as $[0.7,0.5]$. Similarly, for the non-dominant objective, the mean vectors of the distributions are set as $[0.3,0.7]$, $[0.3,0.3]$ and $[0.7,0.5]$, respectively. For all the Gaussian distributions the covariance matrix is given by $0.3* \text{I}$ where $\text{I}$ is the $2$ by $2$ identity matrix.
Then, each Gaussian distribution is normalized by multiplying it with a constant, such that its maximum value becomes $1$. These normalized distributions form the expected arm rewards. In addition, the expected reward of the fourth arm for the dominant objective is set as $0$, and its expected reward for the non-dominant objective is set as the normalized multivariate Gaussian distribution with mean vector $[0.7,0.5]$.
We assume that the reward of an arm in an objective given a context $x$ is a Bernoulli random variable whose parameter is equal to the magnitude of the corresponding normalized distribution at context $x$. 

%

\begin{figure}[h!] 
	
	\begin{minipage}[b]{1.0\linewidth}
		\centering
		\centerline{\includegraphics[width=8.5cm]{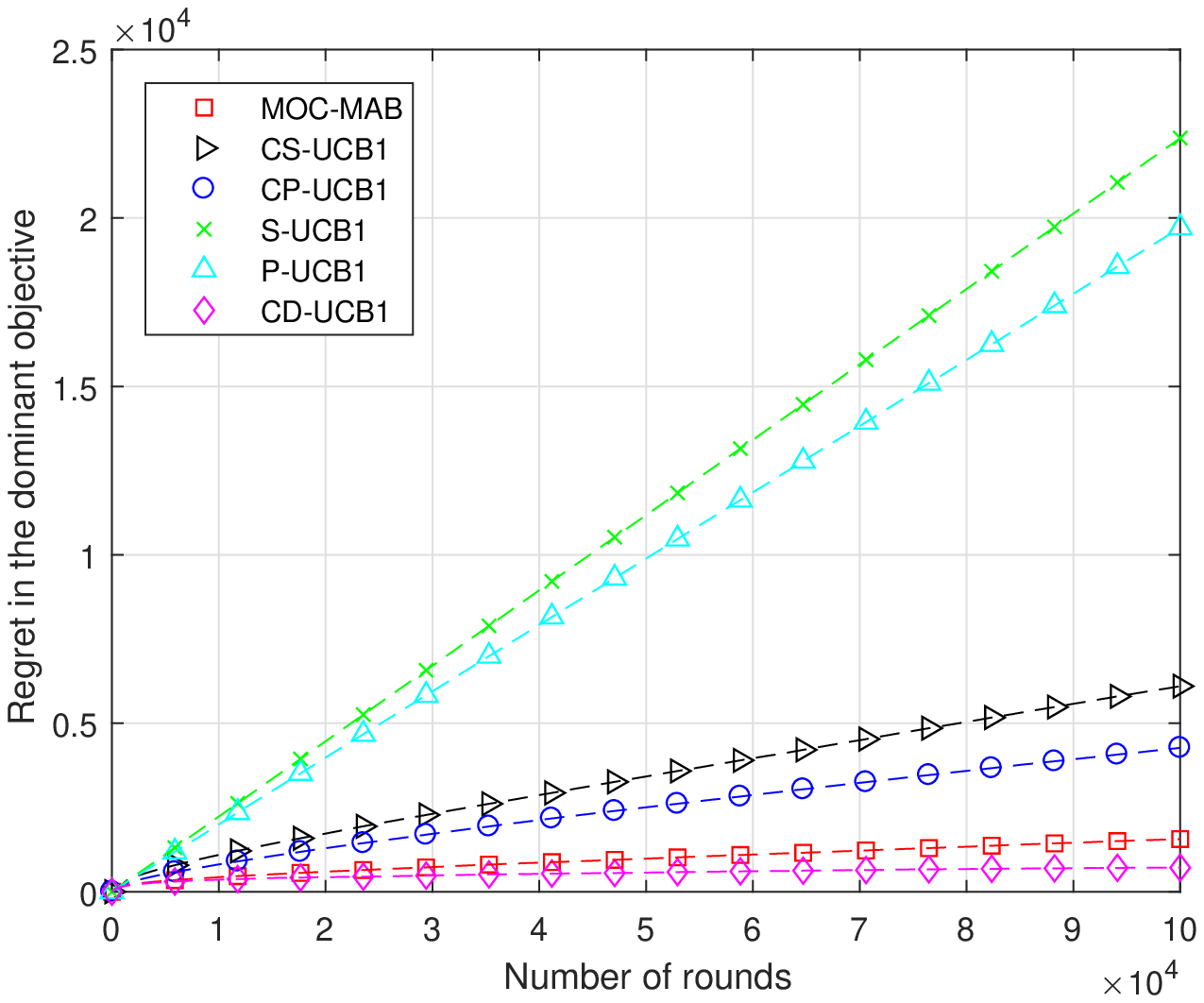}}
		
	\end{minipage}
	
	\begin{minipage}[b]{1.0\linewidth}
		\centering
		\centerline{\includegraphics[width=8.5cm]{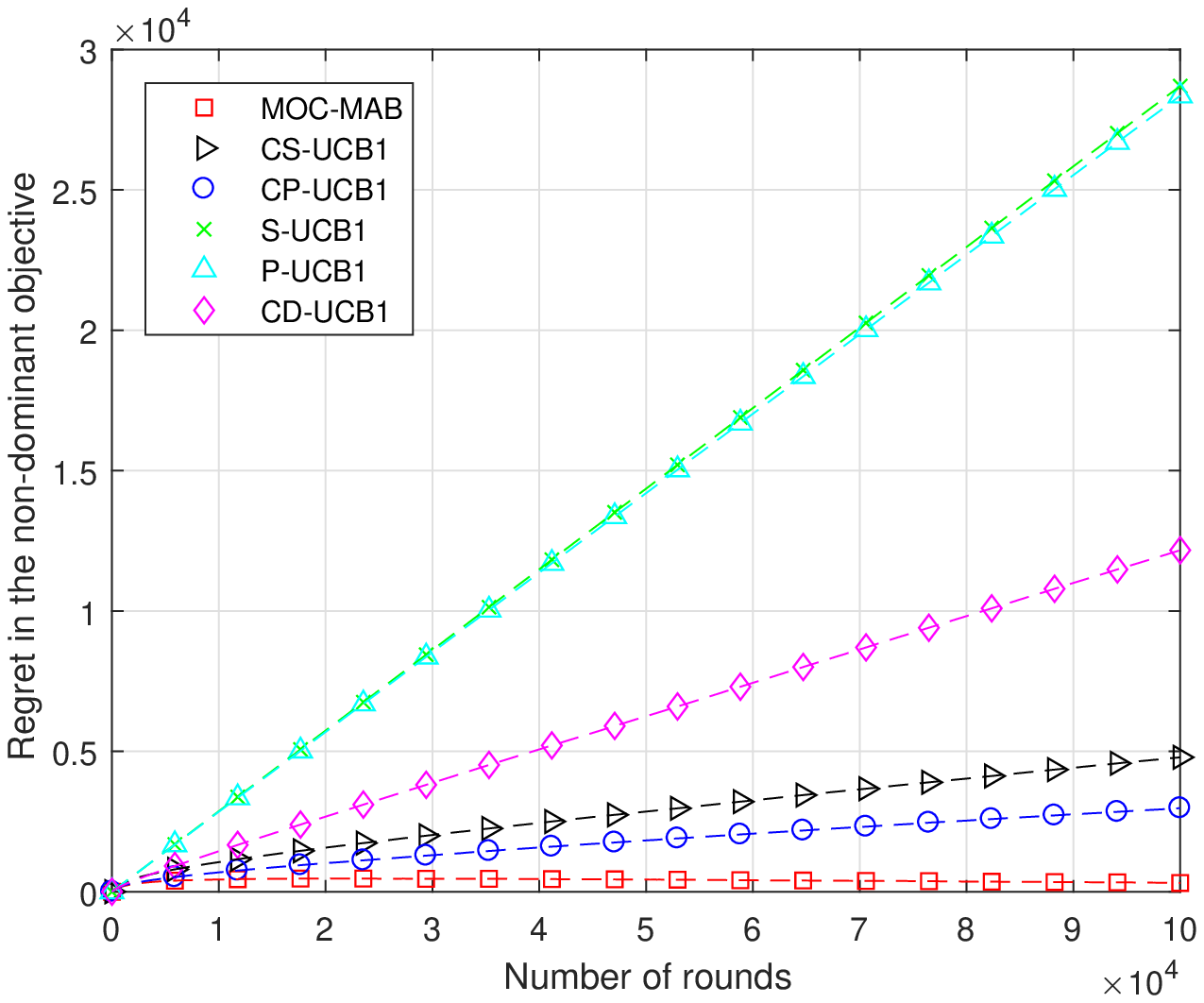}}
		
	\end{minipage}
	
	\caption{Regrets of MOC-MAB and the other algorithms for Experiment 1.}
	\label{fig:p3}
	
\end{figure}

Every algorithm is run $100$ times and the results are averaged over these runs. Simulation results given in Fig. \ref{fig:p3}  show the change in the regret of the algorithms in both objectives as a function of time (rounds). 
As observed from the results, MOC-MAB beats all other algorithms in both objectives except CD-UCB1. While the regret of CD-UCB1 in the dominant objective is slightly better than that of MOC-MAB, its regret is much worse than MOC-MAB in the non-dominant objective. This is expected since it only aims to maximize the reward in the dominant objective without considering the other objective. 

\subsection{Experiment 2 - Multichannel Communication} \label{sec:simchannel}

\rr{In this experiment, we consider the multichannel communication application given in Section \ref{sec:applications} with ${\cal Q} = \{1,2\}$, ${\cal R} = \{1,0.5,0.25,0.1\}$ and $T=10^6$. The channel gain for channel $Q$ in round $t$, denoted by $h^2_{Q,t}$ is independently sampled from the exponential distribution with parameter $\lambda_{Q}$, where 
$[\lambda_1,\lambda_2] = [0.25 ~ 0.25]$. The type of the distributions and the parameters are unknown to the user. $\text{SNR}_{Q,t}$ is sampled from the uniform distribution over $[0,5]$ independently for both channels. In this case, the outage event for transmission rate-channel pair $(R,Q)$ in round $t$ is defined as $\log_2 (1 + h^2_{Q,t} \text{SNR}_{Q,t} ) < R$.

Every algorithm is run $20$ times and the results are averaged over these runs. Simulation results given in Fig. \ref{fig:ads} show the total reward of the algorithms in both objectives as a function of rounds. 
As observed from the results, there is no algorithm that beats MOC-MAB in both objectives. 
In the dominant objective, the total reward of MOC-MAB is 8.21\% higher than that of CP-UCB1, 10.59\% higher than that of CS-UCB1, 21.33\% higher than that of P-UCB1 and 82.94\% higher than that of S-UCB1 but 8.52\% lower than that of CD-UCB1. Similar to Experiment 1, we expect the total reward of CD-UCB1 to be higher than MOC-MAB because it neglects the non-dominant objective.
On the other hand, in the non-dominant objective, MOC-MAB achieves total reward $13.66\%$ higher than that of CD-UCB1.}

\begin{figure}[h!] 
	
	\begin{minipage}[b]{1.0\linewidth}
		\centering
		\centerline{\includegraphics[width=8.5cm]{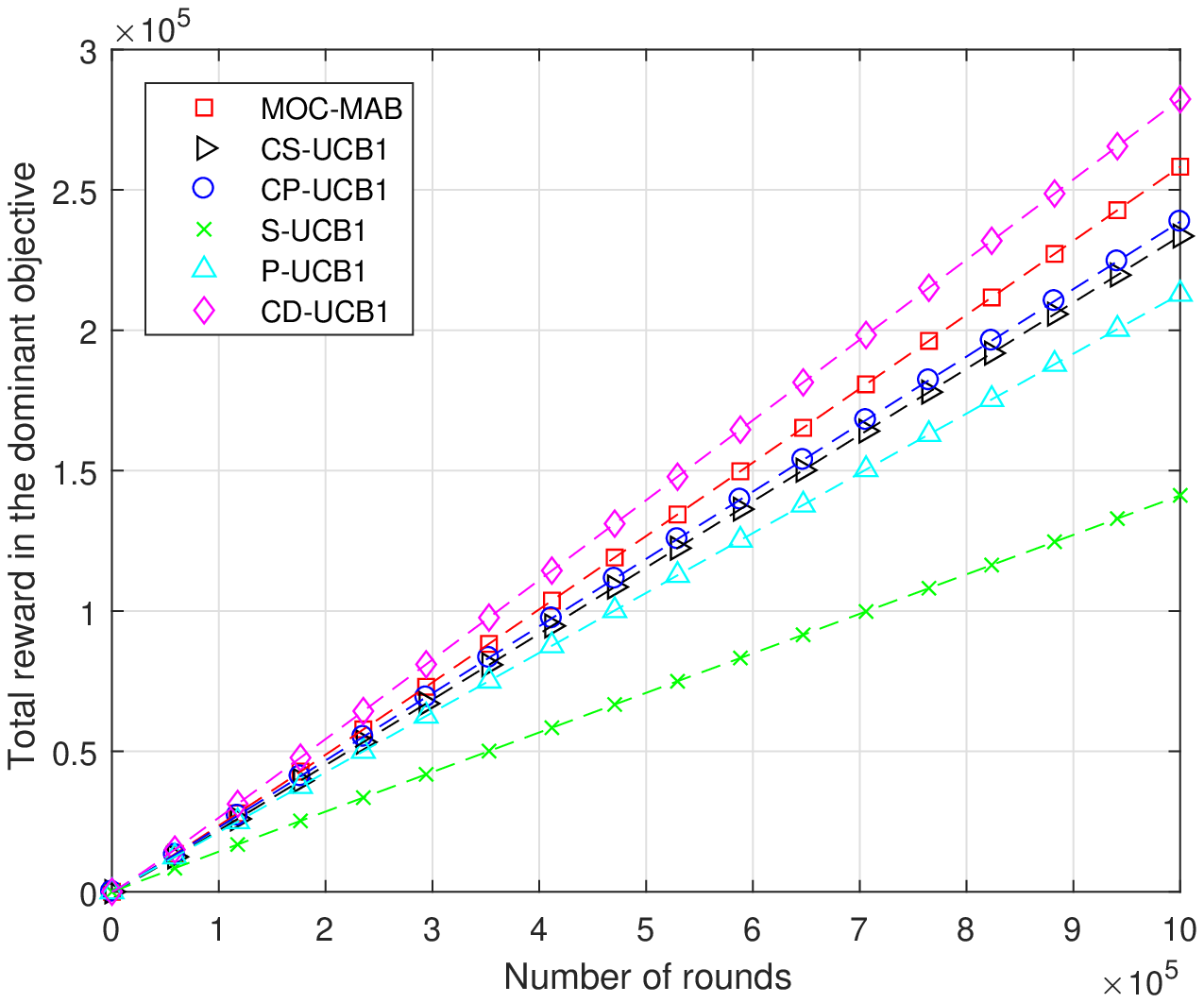}}
		
	\end{minipage}
	
	\begin{minipage}[b]{1.0\linewidth}
		\centering
		\centerline{\includegraphics[width=8.5cm]{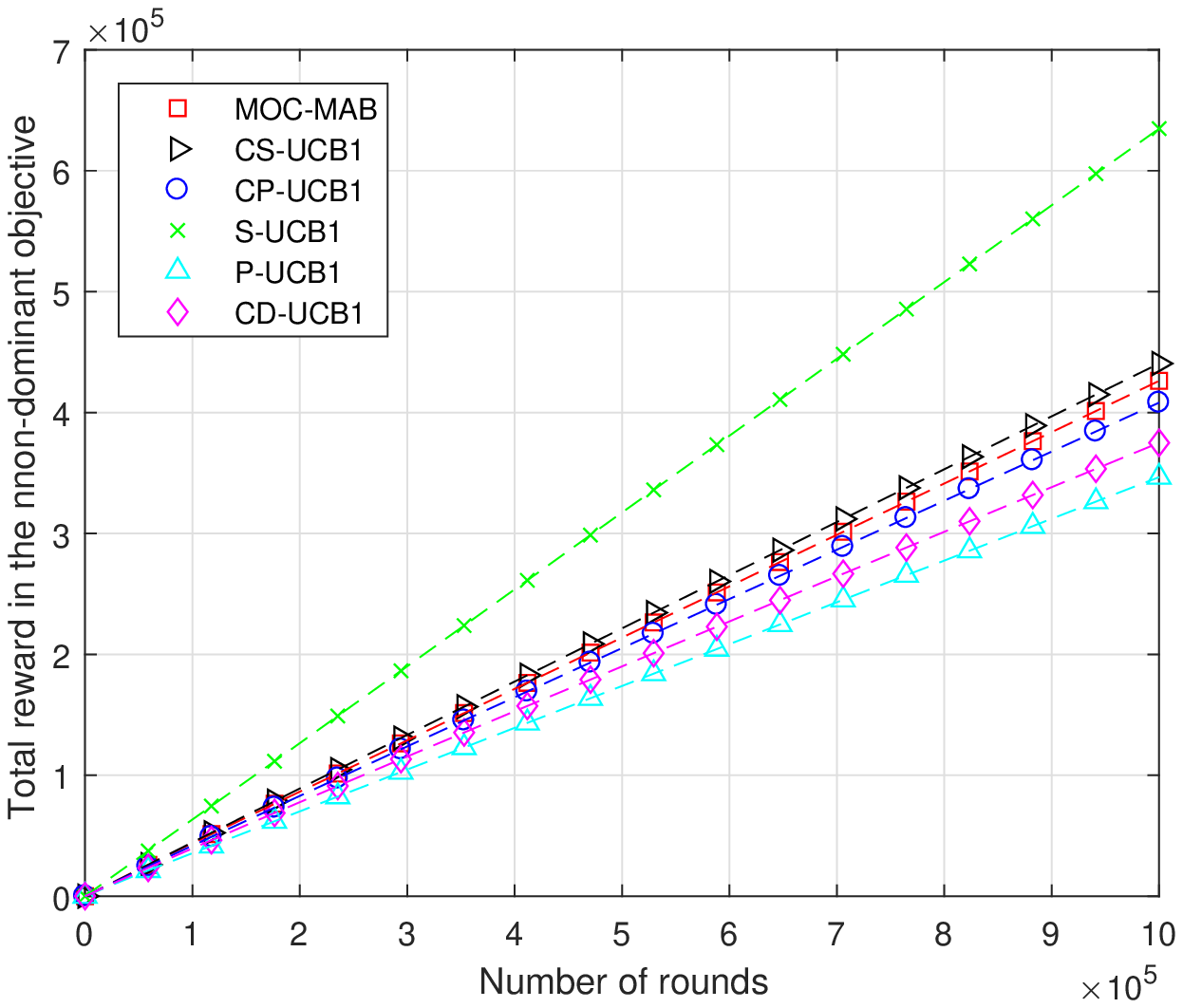}}
		
	\end{minipage}
	
	\caption{Total rewards of MOC-MAB and the other algorithms for Experiment 2.}
	\label{fig:ads}
	
\end{figure}

\subsection{Experiment 3 - Display Advertising} \label{sec:recsys}

\rr{
In this experiment, we consider a simplified display advertising model where in each round $t$ a user with context $x^{\text{usr}}_t$ visits a publisher's website, an ad with context $x^{\text{ad}}_t$ arrives to an advertiser, which together constitute the context $x_t = (x^{\text{usr}}_t, x^{\text{ad}}_t)$. Then, the advertiser decides whether to display the ad on the publisher's website (indicated by action $a$) or not (indicated by action $b$). 
The advertiser makes a unit payment to the publisher for each displayed ad (pay-per-view model). The first objective is related to the click through rate and the second objective is related to the average payment. Essentially, when action $a$ is taken in round $t$, then $r^2_t = 0$, and $r^1_t = 0$ if the user does not click to the ad and $r^1_t=1$ otherwise. When action $b$ is taken in round $t$, the reward is always $(r^1_t,r^2_t) = (0,1)$.

We simulate the model described above by using the Yahoo! Webscope dataset R6A,\footnote{http://webscope.sandbox.yahoo.com/} which consists of over 45 million visits to the Yahoo! Today module during 10 days. This dataset was collected from a personalized news recommender system where articles were displayed to users with a picture, title and a short summary, and the click events were recorded. In essence, the dataset only contains a set of continuous features derived from users and news articles by using conjoint analysis and the click events \cite{chu2009case}. Thus, for our illustrative result, we adopt the feature of the news article as the feature of the ad and the click event as the event that the user clicks to the displayed ad.  

We consider the data collected in the first day which consists of around $4.5$ million samples. Each user and item is represented by $6$ features, one of which is always $1$. We discard the constant features and apply PCA to produce two-dimensional user and item contexts. PCA is applied over all user features to obtain the two-dimensional user contexts $x^{\text{usr}}_t$. To obtain the add contexts $x^{\text{ad}}_t$, we first identify the number of ads with unique features, and then, apply PCA over these. 
The total number of clicks on day $1$ is only \rr{$4.07\%$} of the total number of user-ad pairs. Since the click events are scarce, the difference between the empirical rewards of actions $a$ and $b$ in the dominant objective is very small. Thus, we set $\beta=0.1$ in MOC-MAB in order to further decrease uncertainty in the first objective.  

\begin{figure}[h!] 
	
	\begin{minipage}[b]{1.0\linewidth}
		\centering
		\centerline{\includegraphics[width=7.5cm]{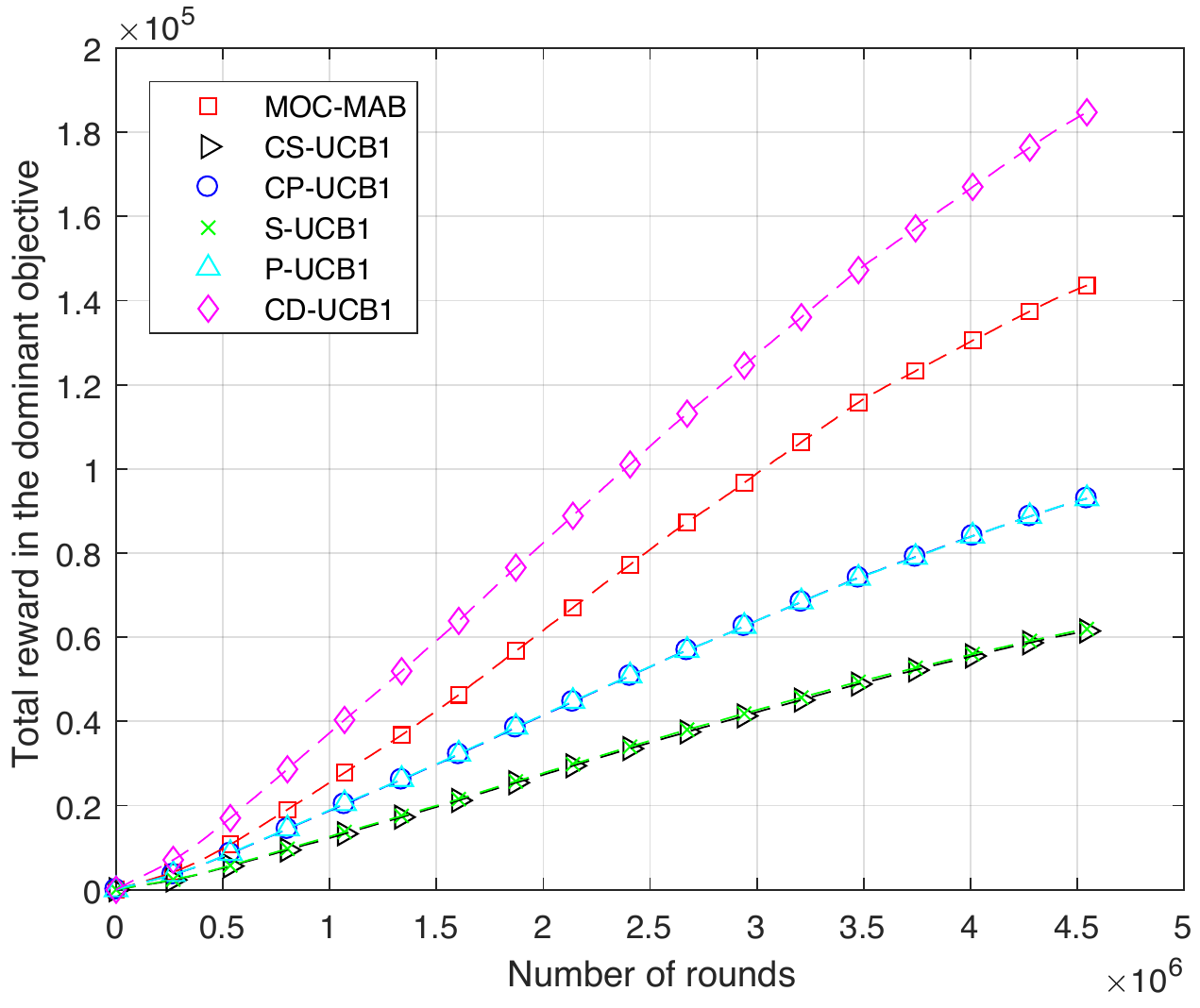}}
		
	\end{minipage}
	
	\begin{minipage}[b]{1.0\linewidth}
		\centering
		\centerline{\includegraphics[width=7.5cm]{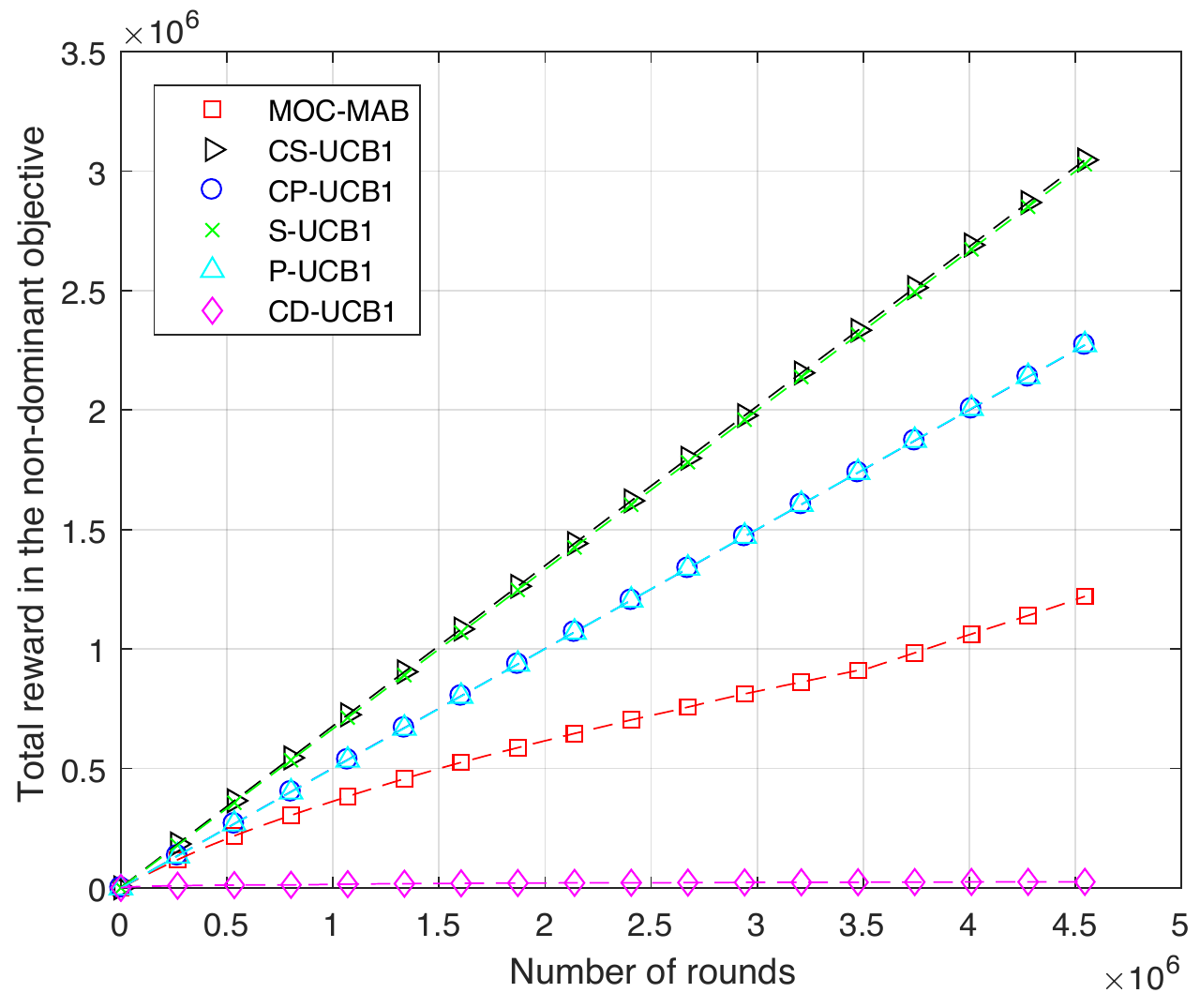}}
		
	\end{minipage}
	
	\caption{Total rewards of MOC-MAB and the other algorithms for Experiment 3.}
	\label{fig:ads2}
	
\end{figure}

Simulation results given in Fig. \ref{fig:ads2} show the total reward of the algorithms in both objectives as a function of rounds. 
In the dominant objective, the total reward of MOC-MAB is 54.5\% higher than that of CP-UCB1, 133.6\% higher than that of CS-UCB1, 54.5\% higher than that of P-UCB1 and 131.8\% higher than that of S-UCB1 but 22.3\% lower than that of CD-UCB1. 
In the non-dominant objective, the total reward of MOC-MAB is 46.3\% lower than that of CP-UCB1, 60\% lower than that of CS-UCB1, 46.3\% lower than that of P-UCB1, 59.7\% lower than that of S-UCB1 and 4751.9\% higher than that of CD-UCB1. As seen from these results, there is no algoritm that outperforms MOC-MAB in both objectives. Although CD-UCB1 outperforms MOC-MAB in the first objective, its total reward in the second objective is much less than the total reward of MOC-MAB.
}

\section{Conclusion}\label{sec:cnc}

In this paper, we propose a new contextual MAB problem with two objectives in which one objective is dominant and the other is non-dominant. According to this definition, we propose two performance metrics: the 2D regret (which is multi-dimensional) and the Pareto regret (which is scalar). Then, we propose an online learning algorithm called MOC-MAB and show that it achieves 
sublinear 2D regret and Pareto regret. To the best of our knowledge, our work is the first to consider a multi-objective contextual MAB problem where the expected arm rewards and contexts are related through similarity information.
We also evaluate the performance of MOC-MAB on both synthetic and real-world datasets and compare it with offline methods and other MAB algorithms. Our results demonstrate that MOC-MAB outperforms its competitors, which are not specifically designed to deal with problems involving dominant and non-dominant objectives.

\begin{appendices}

\section{Proof of Lemma \ref{lemma:prUC}}

From the definitions of $L^i_{a,p}(t)$, $U^i_{a,p}(t)$ and $\text{UC}^i_{a,p}$, it can be observed that the event $\text{UC}^i_{a,p}$ happens when $\mu^i_{a}(\tilde{x}_{p}(t))$ does not fall into the confidence interval $[ L^i_{a,p}(t) -v , U^i_{a,p}(t) +v ]$ for some $t$.
The probability of this event could be easily bounded by using the concentration inequality given in Appendix \ref{app:concentration}, if the expected reward from the same arm did not change over rounds. However, this is not the case in our model since the elements of $\{ \tilde{x}_{p}(t) \}_{t=1}^{N_{p}(T)}$ are not identical which makes the distributions of $\tilde{R}^i_{a,p}(t) $, $t \in \{1, \ldots, N_{p}(T) \}$ different.

In order to resolve this issue, we propose the following: Recall that
\begin{align}
\tilde{R}^i_{a,p}(t) = \rev{\mu^i_{a}( \tilde{x}_{p}(t)) + \tilde{\kappa}^i_p(t)}   \notag
\end{align}
and
\begin{align}
\tilde{\mu}^i_{a,p}(t) 
= 
\rev{ \frac{ \sum_{l=1}^{t-1}  \tilde{R}^i_{a,p}(l) \text{I}( \tilde{a}_p(l) = a)} {\tilde{N}_{a,p}(t)} }       \notag
\end{align}
when $\tilde{N}_{a,p}(t) >0$. Note that when $\tilde{N}_{a,p}(t) = 0$, we have $\tilde{\mu}^i_{a,p}(t) = 0$.
We define two new sequences of random variables, whose sample mean values will lower and upper bound $\tilde{\mu}^i_{a,p}(t)$. The {\em best sequence} is defined as 
$\{  \overline{R}^i_{a,p}(t) \}_{t=1}^{ N_{p}(T) }$ where
\begin{align}
 \overline{R}^i_{a,p}(t) =   \overline{\mu}^i_{a,p} + \tilde{\kappa}^i_p(t)     \notag
\end{align}
and the {\em worst sequence} is defined as 
$\{  \underline{R}^i_{a,p}(t) \}_{t=1}^{ N_{p}(T) }$ where
\begin{align}
 \underline{R}^i_{a,p}(t) = \underline{\mu}^i_{a,p} + \tilde{\kappa}^i_p(t)    .    \notag
\end{align}
Let 
\begin{align}
 \overline{\mu}^i_{a,p}(t) &:= \sum_{l=1}^{t-1} \overline{R}^i_{a,p}(l) \text{I}( \tilde{a}_p(l) = a) / \tilde{N}_{a,p}(t) \notag \\
\underline{\mu}^i_{a,p}(t) &:= \sum_{l=1}^{t-1} \underline{R}^i_{a,p}(l) \text{I}( \tilde{a}_p(l) = a)/ \tilde{N}_{a,p}(t)  \notag 
\end{align}
for $\tilde{N}_{a,p}(t) >0$ and $\overline{\mu}^i_{a,p}(t) = \underline{\mu}^i_{a,p}(t) = 0$ for $\tilde{N}_{a,p}(t) = 0$.
We have
\begin{align}
\underline{\mu}^i_{a,p}(t) \leq \tilde{\mu}^i_{a,p}(t) \leq  \overline{\mu}^i_{a,p}(t)  
~~\forall t \in \{1,\ldots,N_{p}(T)  \}  \notag
\end{align}
almost surely. 

Let
\begin{align}
\overline{L}^i_{a,p}(t) &:=  \overline{\mu}^i_{a,p}(t) - \tilde{u}_{a,p}(t)       \notag \\
\overline{U}^i_{a,p}(t) &:=  \overline{\mu}^i_{a,p}(t) + \tilde{u}_{a,p}(t)     \notag \\
\underline{L}^i_{a,p}(t) &:=   \underline{\mu}^i_{a,p}(t) - \tilde{u}_{a,p}(t)      \notag \\
\underline{U}^i_{a,p}(t) &:=  \underline{\mu}^i_{a,p}(t) + \tilde{u}_{a,p}(t)    .  \notag
\end{align}
Note that $\Pr( \mu^i_{a}( \tilde{x}_{p}(t) ) \notin [L^i_{a,p}(t)- v, U^i_{a,p}(t) + v]  ) = 0$ for $N_{a,p}(t) = 0$ since we have $L^i_{a,p}(t) = -\infty$ and $U^i_{a,p}(t) = + \infty$ when $N_{a,p}(t) = 0$.
Thus, in the rest of the proof, we focus on the case when $N_{a,p}(t) >0$. It can be shown that 
\begin{align}
& \{ \mu^i_{a}( \tilde{x}_{p}(t) ) \notin [L^i_{a,p}(t)- v, U^i_{a,p}(t) + v]  \}  \notag \\
& \subset \{ \mu^i_{a}( \tilde{x}_{p}(t) ) 
\notin [\overline{L}^i_{a,p}(t)  - v, 
            \overline{U}^i_{a,p}(t)  + v]  \}  \notag \\
&\cup  \{ \mu^i_{a}( \tilde{x}_{p}(t) ) 
\notin [\underline{L}^i_{a,p}(t)   - v, 
            \underline{U}^i_{a,p}(t)  + v]  \} . \label{eqn:unionbound1}
\end{align}
The following inequalities can be obtained from the H\"{o}lder continuity assumption:
\begin{align}
& \mu^i_{a}( \tilde{x}_{p}(t) )  \leq \overline{\mu}^i_{a,p}
 \leq \mu^i_{a}( \tilde{x}_{p}(t) ) + L \left( \frac{\sqrt{d}}{m} \right)^{\alpha}  \label{eqn:bestbound} \\
& \mu^i_{a}( \tilde{x}_{p}(t) )  - L \left( \frac{\sqrt{d}}{m} \right)^{\alpha}   \leq \underline{\mu}^i_{a,p}
 \leq \mu^i_{a}( \tilde{x}_{p}(t) ) . \label{eqn:worstbound}  
\end{align}

Since $v = L \left( \sqrt{d}/m \right)^{\alpha}$, using \eqref{eqn:bestbound} and \eqref{eqn:worstbound} it can be shown that
\begin{align}
(i) ~~ \{ \mu^i_{a}( \tilde{x}_{p}(t) ) \notin& [\overline{L}^i_{a,p}(t)  - v, \overline{U}^i_{a,p}(t)    + v]  \}   \notag \\
&\subset  \{ \overline{\mu}^i_{a,p} \notin [\overline{L}^i_{a,p}(t)  , \overline{U}^i_{a,p}(t) ]  \}, \notag \\
(ii) ~~ \{ \mu^i_{a}( \tilde{x}_{p}(t) ) \notin& [\underline{L}^i_{a,p}(t) - v, \underline{U}^i_{a,p}(t)  + v]  \}  \notag \\
&\subset  \{ \underline{\mu}^i_{a,p} \notin [\underline{L}^i_{a,p}(t) , \underline{U}^i_{a,p}(t) ]  \}  .  \notag
\end{align}
Plugging these into \eqref{eqn:unionbound1}, we get
\begin{align*}
&\{ \mu^i_{a}( \tilde{x}_{p}(t) ) \notin [L^i_{a,p}(t)- v, U^i_{a,p}(t) + v]  \}  \notag \\
&\subset \{ \overline{\mu}^i_{a,p} \notin [\overline{L}^i_{a,p}(t)  , \overline{U}^i_{a,p}(t) ]  \}
\cup \{ \underline{\mu}^i_{a,p} \notin [\underline{L}^i_{a,p}(t) , \underline{U}^i_{a,p}(t) ]  \}  .
\end{align*}
Then, using the equation above and the union bound, we obtain
\begin{align}
\Pr( \text{UC}^i_{a,p} ) 
&\leq \Pr \left( \bigcup_{t=1}^{ N_{p}(T) } \{ \overline{\mu}^i_{a,p}  \notin [\overline{L}^i_{a,p}(t)  , \overline{U}^i_{a,p}(t) ]  \}  \right)   \notag \\
&+ \Pr \left( \bigcup_{t=1}^{ N_{p}(T) } \{ \underline{\mu}^i_{a,p} 
\notin [\underline{L}^i_{a,p}(t)  , \underline{U}^i_{a,p}(t)]  \} \right) . \notag
\end{align}
Both terms on the right-hand side of the inequality above can be bounded using the concentration inequality in Appendix \ref{app:concentration}. Using 
$\delta = 1/ (4 |{\cal{A}}| m^{d} T)$ in Appendix \ref{app:concentration} gives
\begin{align*}
\Pr( \text{UC}^i_{a,p} ) \leq \frac{1}{  2 |{\cal A}|  m^{d} T  } 
\end{align*}
since $1 + N_{a,p}(T) \leq T$. 
Then, using the union bound, we obtain
\begin{align}
\Pr( \text{UC}^i_p ) \leq \frac{1}{  2  m^{d} T  }       \notag
\end{align}
and
\begin{align}
\Pr( \text{UC}_p ) \leq \frac{1}{  m^{d} T  }   .    \notag
\end{align}

\section{Proof of Lemma \ref{lemma:instreg1d}}

Let ${\cal T}_{a,p} := \{ 1 \leq l \leq N_{p}(t) : \tilde{a}_{p}(l) = a  \}$ and 
$\tilde{{\cal T}}_{a,p} := \{ l \in {\cal T}_{a,p}: \tilde{N}_{a,p}(l) \geq 1 \}$. By Lemma \ref{lemma:indexdiff}, we have
\begin{align}
\text{Reg}^1_{p}(t) &= \sum_{a \in {\cal A}} \ \sum_{l \in {\cal T}_{a,p} }
 \left( \mu^{1}_{*}( \tilde{x}_{p}(l) ) - \mu^{1}_{ \tilde{a}_{p}(l) }( \tilde{x}_{p}(l) )   \right)
   \notag \\
 & \leq   
  \sum_{a \in {\cal A}} \sum_{l \in \tilde{{\cal T}}_{a,p} } 
  \left(  U^1_{\tilde{a}_{p}(l),p}(l) - L^1_{\tilde{a}_{p}(l),p}(l) + 2 (\beta + 2) v \right) 
   \notag \\
&+ |{\cal A} |C^1_{\max} \notag \\
& \leq    \sum_{a \in {\cal A}} \sum_{l \in \tilde{{\cal T}}_{a,p}  }
  \left(  U^1_{\tilde{a}_{p}(l),p}(l) - L^1_{\tilde{a}_{p}(l),p}(l) \right) 
      \notag \\
&+ 2 (\beta + 2) v N_{p}(t) + |{\cal A} |C^1_{\max} . \label{eqn:composedregret1}
\end{align}
We also have
\begin{align}
\sum_{a \in {\cal A}} \sum_{l \in \tilde{{\cal T}}_{a,p} }&
  \left(  U^1_{\tilde{a}_{p}(l),p}(l) - L^1_{\tilde{a}_{p}(l),p}(l) \right)   \notag \\
  & \leq   \sum_{a \in {\cal A}} \left( B_{m,T} \sum_{ l \in \tilde{{\cal T}}_{a,p}  }  \sqrt{ \frac{1}{\tilde{N}_{a,p}(l) } } \right)   \notag \\
  & \leq B_{m,T} \sum_{a \in {\cal A}} \sum_{k=0}^{ N_{a,p}(t) - 1 } \sqrt{ \frac{1}{1 + k} } \notag \\
  & \leq 2 B_{m,T} \sum_{a \in {\cal A}} \sqrt{ N_{a,p}(t)  } \label{eqn:dec2} \\
  & \leq 2 B_{m,T} \sqrt{   |{\cal A}|  N_{p}(t)   } \label{eqn:cauchyineq2} 
\end{align}
where $B_{m,T} = 2\sqrt{2 A_{m,T}}$, and \eqref{eqn:dec2} follows from the fact that 
\begin{align}
\sum_{ k=0}^{ N_{a,p}(t) - 1  } \sqrt{ \frac{1}{1 + k} } 
\leq \int_{x=0}^{N_{a,p}(t)} \frac{1}{\sqrt{x}} dx = 2 \sqrt{ N_{a,p}(t)  }   .  \notag
\end{align}
Combining \eqref{eqn:composedregret1} and \eqref{eqn:cauchyineq2}, we obtain that on event $\text{UC}^c$
\begin{align}
\text{Reg}^1_{p}(t)
& \leq 
 |{\cal A}|C^1_{\max} + 2 B_{m,T} \sqrt{   |{\cal A}|  N_{p}(t)   } + 2 (\beta + 2) v N_{p}(t) .  \notag
\end{align}

\section{Proof of Lemma \ref{lemma:instreg2n}}

Using the result of Lemma \ref{lemma:multi3}, the contribution to the regret of the non-dominant objective in rounds for which 
$\tilde{u}_{\hat{a}^*_1(t),p}(t) > \beta v$ is bounded by
\begin{align}
C^2_{\max} |{\cal A}| \left( \frac{2 A_{m,T}} {\beta^2 v^2} + 1 \right) . \label{eqn:secondregret1}
\end{align}
Let 
${\cal T}^2_{a,p} := \{ l \leq N_{p}(t) : \tilde{a}_{p}(l) = a
 \text{ and } \tilde{N}_{a,p}(l) \geq 2 A_{m,T} / (\beta^2 v^2)   \}$. 
 By Lemma \ref{lemma:multi2}, we have
\begin{align}
&
 \sum_{a \in {\cal A}} \sum_{l \in {\cal T}^2_{a,p} }
 \left( \mu^{2}_{*}( \tilde{x}_{p}(l) ) - \mu^{2}_{ \tilde{a}_{p}(l) }( \tilde{x}_{p}(l) )   \right)
 \notag \\
 & \leq 
  \sum_{a \in {\cal A}} \sum_{l \in {\cal T}^2_{a,p} }
  \left(  U^2_{\tilde{a}_{p}(l),p}(l) - L^2_{\tilde{a}_{p}(l),p}(l) + 2 v \right) 
\notag \\
& \leq   
  \sum_{a \in {\cal A}} \sum_{l \in {\cal T}^2_{a,p}  }
  \left(  U^2_{\tilde{a}_{p}(l),p}(l) - L^2_{\tilde{a}_{p}(l),p}(l) \right) 
  + 2 v N_{p}(t) .  \label{eqn:composedmetric2}
\end{align}       
We have on event $\text{UC}^c$ 
\begin{align}
\sum_{a \in {\cal A}} \sum_{l \in {\cal T}^2_{a,p} }&
  \left(  U^2_{\tilde{a}_{p}(l),p}(l) - L^2_{\tilde{a}_{p}(l),p}(l) \right)  \notag \\ 
  & \leq   \sum_{a \in {\cal A}} \left( \rev{B_{m,T}}  \sum_{ l \in {\cal T}^2_{a,p}  }  \sqrt{ \frac{1}{\tilde{N}_{a,p}(l) } } \right)   \notag \\
  & \leq B_{m,T} \sum_{a \in {\cal A}} \sum_{k=0}^{ N_{a,p}(t) - 1 } \sqrt{ \frac{1}{1 + k} } \notag \\
  & \leq 2 B_{m,T} \sum_{a \in {\cal A}} \sqrt{ N_{a,p}(t)  } \notag \\
  & \leq 2 B_{m,T} \sqrt{   |{\cal A}|  N_{p}(t)   } \label{eqn:metric2cauchyineq} . 
\end{align}
where $B_{m,T} = 2\sqrt{2 A_{m,T}}$.
Combining \eqref{eqn:secondregret1}, \eqref{eqn:composedmetric2} and \eqref{eqn:metric2cauchyineq}, we obtain
\begin{align}
\text{Reg}^2_{p}(t)  \leq&   C^2_{\max} |{\cal A}| \left( \frac{2 A_{m,T}} {\beta^2 v^2} + 1 \right)  +  2 v N_{p}(t) \notag \\
&+ 2 B_{m,T} \sqrt{   |{\cal A}|  N_{p}(t)   } .  \notag
\end{align}

\section{Concentration Inequality 
\cite{abbasi2011improved,russo2014learning}} \label{app:concentration} 
Consider an arm $a$ for which the rewards of objective $i$ are generated by a process $\{ R^i_{a}(t) \}_{t=1}^T$ with $\mu^i_{a}= \mr{E} [R^i_{a}(t)]$, where the noise \rev{$R^i_{a}(t) - \mu^i_{a}$ is conditionally 1-sub-Gaussian}. Let $N_{a}(T)$ denote the number of times $a$ is selected \rev{by the beginning of round $T$}. 
Let $\hat{\mu}_{a}(T) = \sum_{t=1}^{T-1} \mr{I} (a(t) =a ) R^i_{a}(t) / N_a(T)$ for $N_a(T) >0$ and $\hat{\mu}_{a}(T) = 0$ for $N_a(T) = 0$.
Then, for any $0 < \delta < 1$ with probability at least $1-\delta$ we have


\begin{align}
&\left| \hat{\mu}_{a}(T)  - \mu_a \right| \notag \\
& \leq \sqrt{  \frac{2}{N_a(T)} 
\left(       
1 + 2 \log \left(  \frac{ (1 + N_a(T) )^{1/2} } {\delta}    \right)  
 \right)  }  ~~ \forall T \in \mathbb{N}.   \notag
\end{align}

\end{appendices}

\bibliographystyle{IEEEbib}
\bibliography{OSA}

\end{document}